\newif\ifincludeappendixx
\definecolor{ckeyword}{HTML}{7F0055}
\definecolor{ccomment}{HTML}{3F7F5F}
\definecolor{cstring}{HTML}{2A0099}
\lstdefinestyle{numbers}{
	numbers=left,
	framexleftmargin=20pt,
	numberstyle=\tiny,
	firstnumber=auto,
	numbersep=1em,
	xleftmargin=2em
}
\lstdefinestyle{layout}{
	frame=none,
	captionpos=b,
}
\lstdefinestyle{comment-style}{
	morecomment=[l]//,
	morecomment=[s]{/*}{*/},
	commentstyle={\color{ccomment}\itshape},
}
\lstdefinestyle{string-style}{
	morestring=[b]",%
	morestring=[b]',%
	stringstyle={\color{cstring}},
	showstringspaces=false,%
}
\lstdefinestyle{keyword-style}{
	keywordstyle={\ttfamily\bfseries},
	morekeywords={
		function,
		constructor,
		int,
		bool,
		return,
		returns,
		uint
	},
	morekeywords = [2]{},
	keywordstyle = [2]{\text},
	sensitive=true,
}
\lstdefinestyle{input-encoding}{
	inputencoding=utf8,
	extendedchars=true,
	literate=
	{ℝ}{$\reals$}1%
	{→}{$\rightarrow$}1%
	{α}{$\alpha$}1%
	{β}{$\beta$}1%
	{λ}{$\lambda$}1%
	{θ}{$\theta$}1%
	{ϕ}{$\phi$}1%
}
\lstdefinestyle{escaping}{
	moredelim={**[is][\color{blue}]{\%}{\%}},
	escapechar=|,
	mathescape=true
}
\lstdefinestyle{default-style}{
	basicstyle=\fontencoding{T1}\ttfamily\footnotesize,
	style=numbers,
	style=layout,
	style=comment-style,
	style=string-style,
	style=keyword-style,
	style=input-encoding,
	style=escaping,
	tabsize=2,
	upquote=true
}
\lstdefinelanguage{BASIC}{
	language=C++,
	style=default-style
}[keywords,comments,strings]%
\newcolumntype{d}[1]{D{.}{.}{#1}}
\theoremstyle{plain}
\newtheorem{theorem}{Theorem}[section]
\newtheorem{lemma}[theorem]{Lemma}
\theoremstyle{definition}
\newtheorem{definition}[theorem]{Definition}
\theoremstyle{remark}
\def\eqref#1{equation~\ref{#1}}
\def\1{\bm{1}}
\def\vb{{\bm{b}}}
\def\ve{{\bm{e}}}
\def\vf{{\bm{f}}}
\def\vq{{\bm{q}}}
\def\vx{{\bm{x}}}
\def\vy{{\bm{y}}}
\def\vz{{\bm{z}}}
\def\mA{{\bm{A}}}
\def\mB{{\bm{B}}}
\def\mI{{\bm{I}}}
\def\mL{{\bm{L}}}
\def\mQ{{\bm{Q}}}
\def\mR{{\bm{R}}}
\def\mS{{\bm{S}}}
\def\mU{{\bm{U}}}
\def\mV{{\bm{V}}}
\def\mW{{\bm{W}}}
\def\mX{{\bm{X}}}
\def\mY{{\bm{Y}}}
\def\mZ{{\bm{Z}}}
\DeclareMathAlphabet{\mathsfit}{\encodingdefault}{\sfdefault}{m}{sl}
\SetMathAlphabet{\mathsfit}{bold}{\encodingdefault}{\sfdefault}{bx}{n}
\newcommand{\E}{\mathbb{E}}
\newcommand{\R}{\mathbb{R}}
\DeclareMathOperator{\diag}{diag}
\DeclareMathOperator{\sparsity}{sparsity}
\DeclareMathOperator{\rank}{rank}
\DeclareMathOperator{\Span}{span}
\DeclareMathOperator{\relu}{ReLU}
\newcommand{\bc}[1]{\mathcal{#1}}
\renewcommand{\P}{\mathds{P}}
\newcommand{\hbc}{honest-but-curious\xspace}
\newcommand{\tool}{\textsc{SPEAR}\xspace}
\newcommand{\toolL}{\textbf{Sp}arsity \textbf{E}xploiting \textbf{A}ctivation \textbf{R}ecovery}
\newcommand{\computesigma}{\textsc{ComputeLambda}\xspace}
\newcommand{\greedyfilter}{\textsc{GreedyFilt}\xspace}
\newcommand{\nameQ}{disaggregation matrix\xspace}
\newcommand{\cifar}{CIFAR-10\xspace}
\newcommand{\TIN}{\textsc{TinyImageNet}\xspace}
\newcommand{\mnist}{\textsc{MNIST}\xspace}
\newcommand{\ImgNet}{\textsc{ImageNet}\xspace}
\renewcommand{\th}{$^\text{th}$\xspace}
\newcommand{\crefrangeconjunction}{--}
\crefname{listing}{Lst.}{listings}
\crefname{line}{Line}{Lines}
\crefname{appendix}{App.}{App.}
\newcommand{\appref}[1]{%
	\ifbool{includeappendix}{\cref{#1}}{the appendix}%
}
\newcommand{\Appref}[1]{%
	\ifbool{includeappendix}{\cref{#1}}{The appendix}%
}
\definecolor{refcolor}{RGB}{23, 120, 108} 
\definecolor{citcolor}{RGB}{23, 120, 18}
\title{\tool: Exact Gradient Inversion of Batches in Federated Learning}
\author{Dimitar I. Dimitrov$^{1}$,$\quad$ Maximilian Baader$^{2}$,$\quad$ Mark Niklas Müller$^{2,3}$,$\quad$ Martin Vechev$^{2}$\hfil\\
	\hspace{0em}
	$^{1}$ INSAIT, Sofia University "St. Kliment Ohridski"
	\hspace{2em}
	$^{2}$ ETH Zurich
	\hspace{2em}$^{3}$ LogicStar.ai
	\hfil\\
	\texttt{\{dimitar.iliev.dimitrov\}@insait.ai} $^{1}$\hfil\\
	\texttt{\{mbaader, mark.mueller, martin.vechev\}@inf.ethz.ch}  $^{2}$\hfil\\
}
\begin{document}

	\maketitle
	
	\vspace{-1.5em}
\begin{abstract}
Federated learning is a framework for collaborative machine learning where clients only share gradient updates and not their private data with a server. However, it was recently shown that gradient inversion attacks can reconstruct this data from the shared gradients. In the important honest-but-curious setting, existing attacks enable exact reconstruction only for batch size of $b=1$, with larger batches permitting only approximate reconstruction. In this work, we propose \tool, \emph{the first algorithm reconstructing whole batches with $b >1$ exactly}. \tool combines insights into the explicit low-rank structure of gradients with a sampling-based algorithm. Crucially, we leverage ReLU-induced gradient sparsity to precisely filter out large numbers of incorrect samples, making a final reconstruction step tractable. We provide an efficient GPU implementation for fully connected networks and show that it recovers high-dimensional ImageNet inputs in batches of up to $b \lesssim 25$ exactly while scaling to large networks. Finally, we show theoretically that much larger batches can be reconstructed with high probability given exponential time.
\end{abstract}
\vspace{-0.5em}
	\section{Introduction}

\begin{wrapfigure}{R}{0.5\textwidth}
	\begin{minipage}{.5\textwidth}
		\vspace{-4.3em}
		\begin{figure}[H]
			\centering
			\hspace{-0.5cm}
			\input{figures/comparison_fig.tex}
			\caption{A sample of four images from a batch of $b=20$, reconstructed using our \tool (top) or the prior state-of-the-art \citet{geiping} (mid), compared to the ground truth (bottom).}
			\label{fig:accept}
		\end{figure}
		\vspace{-3em}
	\end{minipage}%
\end{wrapfigure}

\paragraph{Federated Learning} has emerged as the dominant paradigm for training machine learning models collaboratively without sharing sensitive data \citep{fedsgd}. 
Instead, a central server sends the current model to all clients which then send back gradients computed on their private data. The server aggregates the gradients and uses them to update the model. Using this approach sensitive data never leaves the clients' machines, aligning it better with data privacy regulations such as the General Data Protection Regulation (GDPR) and California Consumer Privacy Act (CCPA).

\paragraph{Gradient Inversion Attacks} Recent work has shown that an \hbc server can use the shared gradient updates to recover the sensitive client data \citep{dlg,fedavg}. However, while \emph{exact} reconstruction was shown to be possible for batch sizes of $b=1$ \citep{analyticPhong,rgap}, it was assumed to be infeasible for larger batches.  This led to a line of research on approximate methods that sacrificed reconstruction quality in order to recover batches of $b>1$ inputs \cite{aaai,tableak,cocktail}. In this paper we challenge this fundamental assumption and, for the first time, show that exact reconstruction is possible for batch sizes $b > 1$.

\paragraph{This Work: Exact Reconstruction of Batches} We propose the \emph{first gradient inversion attack reconstructing inputs exactly for batch sizes $b > 1$} in the \hbc setting. In \cref{fig:accept}, we show the resulting reconstructions versus approximate methods~\citep{geiping} for a batch of $b=20$ images.

Our approach leverages two key properties of gradient updates in fully connected ReLU networks: 
First, these gradients have a specific \emph{low-rank structure} due to small batch sizes $b \ll n,m$ compared to the input dimensionality $n$ and the hidden dimension $m$. 
Second, the (unknown) gradients with respect to the inputs of the first ReLU layer are sparse due to the ReLU function itself. 
We combine these properties with ideas from sparsely-used dictionary learning \citep{spielman2012exact} to propose a sampling-based algorithm, called \tool (\toolL) and show that it succeeds with high probability for $b < m$. While \tool scales exponentially with batch size $b$, we provide a highly parallelized GPU implementation, which empirically allows us to reconstruct batches of size up to $b \lesssim 25$ exactly even for large inputs (\ImgNet) and networks (widths up to $2000$ neurons and depths up to $9$ layers) in around one minute per batch.

\paragraph{Main Contributions:}
\begin{itemize}
    \vspace{-3.5mm}
    \setlength\itemsep{0.15em}
    \item The first gradient inversion attack showing theoretically that \emph{exact reconstruction} of complete batches with size $b \!>\! 1$ in the \hbc setting is possible.
    \item \tool: a sampling-based algorithm leveraging \emph{low rankness} and ReLU-induced \emph{sparsity of gradients} for exact gradient inversion that succeeds with high probability.
    \item A highly parallelized GPU implementation of \tool, which we empirically demonstrate to be effective across a wide range of settings and make publicly available on \href{https://github.com/eth-sri/SPEAR}{GitHub}.
\end{itemize}

	\section{Method Overview}\label{sec:overview}

\begin{wrapfigure}{R}{0.565\textwidth}
	\begin{minipage}{.565\textwidth}
		\vspace{-5.7em}
		\begin{figure}[H]
			\centering
			\includegraphics[width=\linewidth]{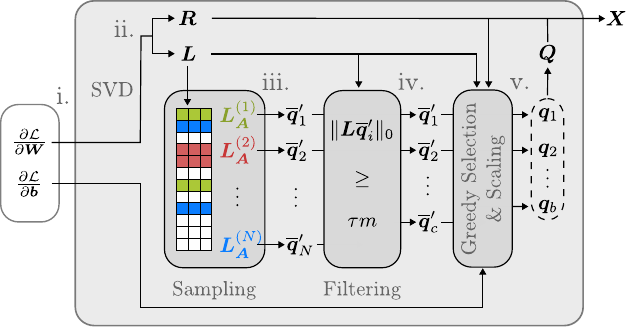}
			\captionof{figure}{Overview of \tool. The gradient $\frac{\partial \mathcal{L}}{\mW}$ is decomposed to $\mR$ and $\mL$. Sampling gives $N$ proposal directions, which we filter down to $c$ candidates via a sparsity criterion with threshold $\tau*m$. A greedy selection method selects batchsize $b$ directions. Scale recovery via $\frac{\partial \mathcal{L}}{\partial \vb}$ returns the \nameQ $\mQ$ and thus the inputs $\mX$.}
			\label{fig:overview}
		\end{figure}
		\vspace{-3.3em}
	\end{minipage}
\end{wrapfigure}

We first introduce our setting before giving a high-level overview of our attack \tool, whose sketch is shown in \cref{fig:overview}. 

\paragraph{Setting} 
We consider a neural network $\vf$ containing a linear layer $\vz = \mW \vx + \vb$ followed by ReLU activations $\vy = \relu(\vz)$ trained with a loss function $\mathcal{L}$. Let now $\mX \in \mathbb{R}^{n \times b}$ be a batch of $b$ inputs to the linear layer $\mZ=\mW \mX + (\vb| \dots| \vb)$, with weights $\mW \in \mathbb{R}^{m \times n}$, bias $\vb \in \mathbb{R}^m$ and output $\mZ \in \mathbb{R}^{m \times b}$. Further, let $\mY \in \mathbb{R}^{m \times b}$ be the result of applying the ReLU activation to $\mZ$, i.e., $\mY = \relu(\mZ)$ and assume $b \leq m, n$. The goal of \tool is to \emph{recover the inputs} $\mX$ (up to permutation) given the gradients $\frac{\partial \mathcal{L}}{\partial \mW}$ and $\frac{\partial \mathcal{L}}{\partial \vb}$ (see \cref{fig:overview},~\romannumeral 1).

\paragraph{Low-Rank Decomposition}
We first show that the weight gradient $\frac{\partial \mathcal{L}}{\partial \mW} = \frac{\partial \mathcal{L}}{\partial \mZ} \mX^\top$ naturally has a low rank $b \leq m, n$ (\cref{theorem:dLdW_dLdZ.XT}) and can therefore be decomposed as \makebox{$\frac{\partial \mathcal{L}}{\partial \mW} = \mL \mR$} with $\mL \in \R^{m \times b}$ and $\mR \in \R^{b \times n}$ using SVD (\cref{fig:overview},~\romannumeral 2). We then prove the existence \nameQ $\mQ = (\vq_1 | \dots | \vq_b) \in \text{GL}_b(\mathbb{R})$, allowing us to express the inputs as $\mX^\top = \mQ^{-1} \mR$ and activation gradients as $\frac{\partial \mathcal{L}}{\partial \mZ} = \mL \mQ$ (\cref{theorem:q}). 
Next, we leverages the sparsity of $\frac{\partial \mathcal{L}}{\partial \mZ}$ to recover $\mQ$ exactly.

\paragraph{ReLU Induced Sparsity}
We show that ReLU layers induce sparse activation gradients $\frac{\partial \mathcal{L}}{\partial \mZ}$ (\cref{sec:sparsity}). We then leverage this sparsity to show that, with high probability, there exist submatrices $\mL_\mA \in \mathbb{R}^{b-1 \times b}$ of $\mL$, such that their kernel is an unscaled column $\overline{\vq}_i$ of our \nameQ $\mQ$, i.e., $\ker(\mL_\mA) = \Span(\vq_i)$, for all $i \in \{1, \dots, b\}$ (\cref{theorem:dirs}). 
Given these unscaled colmuns $\overline{\vq}_i$, we recover their scale by leveraging the bias gradient $\frac{\partial \mathcal{L}}{\partial \vb}$ (\cref{theorem:scaling}). %

\paragraph{Sampling and Filtering Directions}
To identify the submatrices $\mL_\mA$ of $\mL$ which induce the directions $\overline{\vq}_i$, we propose a sampling approach (\cref{sec:filtering}): We randomly sample $b-1$ rows of $\mL$ to obtain an $\mL_\mA$ and thus proposal direction $\overline{\vq}'_i = \ker(\mL_\mA)$ (\cref{fig:overview}~\romannumeral 3). 
Crucially, the product $\mL \overline{\vq}'_i = \frac{\partial \mathcal{L}}{\partial \vz_i}$ recovers a column of the sparse activation gradient $\frac{\partial \mathcal{L}}{\partial \mZ}$ for correct directions $\overline{\vq}'_i$ and a dense linear combination of such columns for incorrect ones. This sparsity gap allows the large number $N$ of proposal directions obtained from submatrices $\mL_\mA$ to be filtered to $c \gtrsim b$ unique candidates (\cref{fig:overview}~\romannumeral 4).
\paragraph{Greedy Direction Selection}
We now have to select the correct $b$ directions from our set of $c$ candidates (\cref{fig:overview},~\romannumeral 5). To this end, we build an initial solution $\mQ'$ from the $b$ directions inducing the highest sparsity in $\frac{\partial \mathcal{L}}{\partial \mZ}' = \mL \mQ'$. To assess the quality of this solution $\mQ'$, we introduce the sparsity matching score $\sigma$ which measures how well the sparsity of the activation gradients $\frac{\partial \mathcal{L}}{\partial \mZ}'$ matches the ReLU activation pattern induced by the reconstructed input $\mX'^\top = \mQ'^{-1} \mR$. Finally, we greedily optimize $\mQ'$ to maximize the sparsity matching score, by iteratively replacing an element $\vq'_i$ of $\mQ'$ with the candidate direction $\vq'_j$ yielding the greatest improvement in $\sigma$ until convergence. We can then validate the resulting input $\mX^\top = \mQ^{-1} \mR$ by checking whether it induces the correct gradients. We formalize this as \cref{alg:gradient_dismantle} in \cref{sec:algorithm} and show that it succeeds with high probability for $b < m$.

	\section{Gradient Inversion via Sparsity and Low-Rankness}
\label{sec:technical}

In this section, we will demonstrate that both low rankness and sparsity arise naturally for gradients of fully connected ReLU networks and explain theoretically how we recover $\mX$. 
Specifically, in \cref{sec:low_rank}, we first argue that $\frac{\partial \mathcal{L}}{\partial \mW} = \frac{\partial \mathcal{L}} {\partial \mZ}  \mX^T$ follows direclty from the chain rule. We then show that for every decomposition $\frac{\partial \mathcal{L}}{\partial \mW} = \mL \mR$, there exists an unknown \nameQ $\mQ$ allowing us to reconstruct $\mX^\top = \mQ^{-1} \mR$ and  $\frac{\partial \mathcal{L}} {\partial \mZ} = \mL \mQ$. 
The remainder of the section then focuses on recovering $\mQ$.
To this end, we show in \cref{sec:sparsity} that ReLU layers induce sparsity in $\frac{\partial \mathcal{L}}{\partial \mZ}$, which we then leveraged in \cref{sec:breaking_aggregation} to reconstruct the columns of $\mQ$ up to scale. Finally, in \cref{sec:scale}, we show how the scale of $\mQ$'s columns can be recovered from $\frac{\partial \mathcal{L}}{\partial \vb}$. 
Unless otherwise noted, we defer all proofs to \cref{app:proofs}.

\subsection{Explicit Low-Rank Representation of $\frac{\partial \mathcal{L}} {\partial \mW}$} \label{sec:low_rank}

We first show that the weight gradients $\frac{\partial \mathcal{L}} {\partial \mW}$ can be written as follows:

\begin{restatable}{theorem}{dLdWdLdZXT} \label{theorem:dLdW_dLdZ.XT}
	The network's gradient w.r.t. the weights $\mW$ can be represented as the matrix product:
	\begin{equation}\label{eq:decomp}
		\frac{\partial \mathcal{L}} {\partial \mW} = \frac{\partial \mathcal{L}} {\partial \mZ}  \mX^T.
	\end{equation}
\end{restatable}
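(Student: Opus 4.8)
The plan is to establish \eqref{eq:decomp} by a direct entrywise application of the chain rule to the affine map $\mZ = \mW\mX + (\vb|\dots|\vb)$. First I would fix the standard convention that for a scalar $\mathcal{L}$ and any matrix $\mM$ the gradient matrix $\frac{\partial \mathcal{L}}{\partial \mM}$ has $(i,k)$-entry equal to $\frac{\partial \mathcal{L}}{\partial M_{ik}}$. Next I would observe that, in the setting under consideration, $\mathcal{L}$ depends on $\mW$ \emph{only} through $\mZ$: the weights $\mW$ feed into the linear layer, and everything downstream (the ReLU activation $\mY = \relu(\mZ)$, the remaining layers, and the loss) is a function of $\mZ$ alone. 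Hence the multivariate chain rule applies with $\mZ$ as the intermediate variable.

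Concretely, I would write, for each $i \in \{1,\dots,m\}$ and $k \in \{1,\dots,n\}$,
\[
	\frac{\partial \mathcal{L}}{\partial W_{ik}}
	= \sum_{p=1}^{m}\sum_{j=1}^{b} \frac{\partial \mathcal{L}}{\partial Z_{pj}}\,\frac{\partial Z_{pj}}{\partial W_{ik}}.
\]
Then I would compute the inner derivative directly from the coordinate expression $Z_{pj} = \sum_{l=1}^{n} W_{pl} X_{lj} + b_p$, which gives $\frac{\partial Z_{pj}}{\partial W_{ik}} = \delta_{pi}\, X_{kj}$. Substituting this collapses the sum over $p$, yielding
\[
	\frac{\partial \mathcal{L}}{\partial W_{ik}}
	= \sum_{j=1}^{b} \frac{\partial \mathcal{L}}{\partial Z_{ij}}\, X_{kj}
	= \sum_{j=1}^{b} \Bigl(\tfrac{\partial \mathcal{L}}{\partial \mZ}\Bigr)_{ij}\, (\mX^\top)_{jk}
	= \Bigl(\tfrac{\partial \mathcal{L}}{\partial \mZ}\,\mX^\top\Bigr)_{ik},
\]
which is exactly \eqref{eq:decomp}. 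As an alternative (or cross-check) I would note the coordinate-free version: the differential satisfies $d\mathcal{L} = \langle \frac{\partial \mathcal{L}}{\partial \mZ},\, d\mZ\rangle_F = \langle \frac{\partial \mathcal{L}}{\partial \mZ},\, (d\mW)\mX\rangle_F = \langle \frac{\partial \mathcal{L}}{\partial \mZ}\mX^\top,\, d\mW\rangle_F$ by the cyclic property of the trace, and the claim follows from the definition of the gradient.

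There is essentially no hard step here; the statement is a bookkeeping identity. The only point requiring care — more a matter of being explicit than an obstacle — is justifying that $\mathcal{L}$'s dependence on $\mW$ is \emph{entirely} mediated by $\mZ$ (so that no extra terms appear in the chain rule), and keeping the index/transpose conventions consistent so that the sum $\sum_j \frac{\partial \mathcal{L}}{\partial Z_{ij}} X_{kj}$ is correctly identified with the $(i,k)$-entry of the product $\frac{\partial \mathcal{L}}{\partial \mZ}\mX^\top$ rather than of $\mX\,\frac{\partial \mathcal{L}}{\partial \mZ}$ or a transpose thereof. I would therefore state the gradient convention up front and carry the computation through at the level of individual entries.
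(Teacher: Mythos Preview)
Your proposal is correct and matches the paper's proof essentially step for step: the paper also computes $\frac{\partial \mathcal{L}}{\partial W_{ik}}$ by applying the chain rule through $\mZ$, obtains $\frac{\partial Z_{pj}}{\partial W_{ik}}$ as a product of Kronecker deltas times an entry of $\mX$, and collapses the sum to the $(i,k)$-entry of $\frac{\partial \mathcal{L}}{\partial \mZ}\,\mX^\top$. The only cosmetic difference is that the paper uses Einstein summation notation throughout, whereas you write the sums explicitly and add the Frobenius-inner-product cross-check.
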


For batch sizes $b \leq n, m$, the dimensionalities of $\frac{\partial \mathcal{L}} {\partial \mZ} \in \mathbb{R}^{m \times b}$ and $\mX \in \mathbb{R}^{n \times b}$ in \cref{eq:decomp} directly yield that the rank of $\frac{\partial \mathcal{L}} {\partial \mW}$ is at most $b$. This confirms the observations of \citet{cocktail} and shows that $\mX$ and $\frac{\partial \mathcal{L}} {\partial \mZ}$ correspond to a specific low-rank decomposition of $\frac{\partial \mathcal{L}} {\partial \mW}$.

To actually find this decomposition and thus recover $\mX$, we first consider an arbitrary decomposition of the form $\frac{\partial \mathcal{L}} {\partial \mW}= \mL  \mR$, where $\mL \in \mathbb{R}^{m \times b}$ and $\mR \in \mathbb{R}^{b \times n}$ are of maximal rank. We chose the decomposition obtained via the reduced SVD decomposition of $\frac{\partial \mathcal{L}} {\partial \mW} = \mU  \mS  \mV$ by setting $\mL = \mU  \mS ^ {\frac{1}{2}}$ and $\mR = \mS ^ {\frac{1}{2}}  \mV$, where $\mU \in \mathbb{R}^{n \times b}$, $\mS \in \mathbb{R}^{b \times b}$ and $\mV \in \mathbb{R}^{b \times n}$. We now show that there exists an unique \nameQ $\mQ$ recovering $\mX$ and $\frac{\partial \mathcal{L}} {\partial \mZ}$ from $\mL$ and $\mR$:

\begin{restatable}{theorem}{q} \label{theorem:q}
	If the gradient $\frac{\partial \mathcal{L}} {\partial \mZ}$ and the input matrix $\mX$ are of full-rank and $b \leq n, m$, then there exists an unique matrix $\mQ \in \mathbb{R}^{b \times b}$ of full-rank s.t. $\frac{\partial \mathcal{L}} {\partial \mZ} = \mL  \mQ $ and $\mX^T = \mQ^{-1}  \mR$.
\end{restatable}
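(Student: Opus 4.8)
The plan is to exploit that both $(\tfrac{\partial \mathcal{L}}{\partial \mZ},\,\mX^\top)$ and $(\mL,\,\mR)$ are rank-$b$ factorizations of the \emph{same} matrix $\tfrac{\partial \mathcal{L}}{\partial \mW}$, and that any two rank-$b$ factorizations of a rank-$b$ matrix are related by a unique invertible $b\times b$ change of basis. First I would pin down the rank of $\tfrac{\partial \mathcal{L}}{\partial \mW}$: by \cref{theorem:dLdW_dLdZ.XT} it equals $\tfrac{\partial \mathcal{L}}{\partial \mZ}\mX^\top$, a product of an $m\times b$ matrix of rank $b$ and a $b\times n$ matrix of rank $b$ (using $b\le n,m$ together with the full-rank hypotheses on $\tfrac{\partial \mathcal{L}}{\partial \mZ}$ and $\mX$), so Sylvester's rank inequality forces $\rank\bigl(\tfrac{\partial \mathcal{L}}{\partial \mW}\bigr)=b$. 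Consequently the SVD-based factors $\mL=\mU\mS^{1/2}$ and $\mR=\mS^{1/2}\mV$ also have rank $b$, with $\mL$ of full column rank and $\mR$ of full row rank.

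Next I would match column spaces. Since $\mX^\top$ has full row rank, the column space of $\tfrac{\partial \mathcal{L}}{\partial \mZ}\mX^\top$ equals that of $\tfrac{\partial \mathcal{L}}{\partial \mZ}$; since $\mR$ has full row rank, the column space of $\mL\mR$ equals that of $\mL$. Both coincide with the column space of $\tfrac{\partial \mathcal{L}}{\partial \mW}$, a $b$-dimensional subspace of $\R^m$, so $\mL$ and $\tfrac{\partial \mathcal{L}}{\partial \mZ}$ have the same column space. As $\mL$ has full column rank it admits the left inverse $\mL^+=(\mL^\top\mL)^{-1}\mL^\top$; setting $\mQ:=\mL^+\tfrac{\partial \mathcal{L}}{\partial \mZ}$ gives $\mL\mQ=\tfrac{\partial \mathcal{L}}{\partial \mZ}$, because $\mL\mL^+$ restricts to the identity on the column space of $\mL$, which by the previous step contains every column of $\tfrac{\partial \mathcal{L}}{\partial \mZ}$. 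Uniqueness of $\mQ$ with $\mL\mQ=\tfrac{\partial \mathcal{L}}{\partial \mZ}$ is then immediate from left-cancellation of the injective map $\mL$, and $\rank(\mQ)=b$ (hence invertibility) follows from $\rank(\mL\mQ)=\rank\bigl(\tfrac{\partial \mathcal{L}}{\partial \mZ}\bigr)=b$.

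Finally, for the second identity I would left-cancel $\mL$ in the chain $\mL\mR=\tfrac{\partial \mathcal{L}}{\partial \mW}=\tfrac{\partial \mathcal{L}}{\partial \mZ}\mX^\top=\mL\mQ\mX^\top$, obtaining $\mR=\mQ\mX^\top$ and thus $\mX^\top=\mQ^{-1}\mR$. The only genuinely delicate point — and the one I would be most careful about — is the column-space bookkeeping: $\mL\mL^+$ is \emph{not} the identity on all of $\R^m$ but only on the column space of $\mL$, so establishing that this column space equals that of $\tfrac{\partial \mathcal{L}}{\partial \mZ}$ is the linchpin that makes $\mL\mQ=\tfrac{\partial \mathcal{L}}{\partial \mZ}$ an exact equality rather than one that holds only after projection. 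Everything else is routine linear algebra (rank counting, one-sided inverses, left-cancellation).
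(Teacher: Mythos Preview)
Your proof is correct and follows essentially the same strategy as the paper: both arguments show that any two full-rank factorizations of a rank-$b$ matrix into an $m\times b$ times a $b\times n$ product are related by a unique matrix in $\text{GL}_b(\R)$. The paper packages this as a standalone lemma (\cref{lemma:decomp}) and proves it by taking a left inverse of $\tfrac{\partial\mathcal{L}}{\partial\mZ}$ and a right inverse of $\mX^\top$, defining $\mQ=\mR(\mX^\top)^{-R}$ and then checking both identities algebraically.

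The only noteworthy difference is the construction of $\mQ$: you build it from the \emph{known} side via $\mQ=\mL^{+}\tfrac{\partial\mathcal{L}}{\partial\mZ}$ and justify $\mL\mQ=\tfrac{\partial\mathcal{L}}{\partial\mZ}$ by the column-space matching argument, whereas the paper builds it from the \emph{target} side via a right inverse of $\mX^\top$ and never needs to invoke column spaces explicitly. Your route makes the geometric reason for exactness (that $\mL\mL^{+}$ is the identity on $\text{col}(\mL)=\text{col}(\tfrac{\partial\mathcal{L}}{\partial\mZ})$) more visible; the paper's route is slightly shorter because the verification $\mL\mQ=\mL\mR(\mX^\top)^{-R}=\tfrac{\partial\mathcal{L}}{\partial\mZ}\mX^\top(\mX^\top)^{-R}=\tfrac{\partial\mathcal{L}}{\partial\mZ}$ is a one-liner that avoids any subspace bookkeeping. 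Both are equally valid and of comparable length.
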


\cref{theorem:q} is a direct application of \cref{lemma:decomp} shown in \cref{app:proofs}, a general linear algebra result stating that under most circumstances different low-rank matrix decompositions can be transformed into each other via an unique invertible matrix. Crucially, this implies that recovering the input $\mX$ and the gradient $\frac{\partial \mathcal{L}} {\partial \mZ}$ matrices is equivalent to obtaining the unique \nameQ $\mQ$. Next, we show how the ReLU-induced sparsity patterns in $\frac{\partial \mathcal{L}} {\partial \mZ}$ or $\mX$ can be leveraged to recover $\mQ$ exactly.

\subsection{ReLU-Induced Sparsity} \label{sec:sparsity}
ReLU activation layers can induce sparsity both in the gradient $\frac{\partial \mathcal{L}} {\partial \mZ}$ (if the ReLU activation succeeds the considered linear layer) or in the input (if the ReLU activation precedes the linear layer).

\paragraph{Gradien Sparsity} If a ReLU activation succeeds the linear layer, i.e.,  $\mY = \relu(\mZ)$, we have $\frac{\partial \mathcal{L}} {\partial \mZ} = \frac{\partial \mathcal{L}} {\partial \mY} \odot \mathds{1}_{[\mZ>0]}$, where $\odot$ is the elementwise multiplication and $\mathds{1}_{[\mZ>0]}$ is a matrix of $0$s and $1$s with each entry indicating if the corresponding entry in $\mZ$ is positive. At initialization, roughly half of the entries in $\mZ$ are positive, making $\frac{\partial \mathcal{L}} {\partial \mZ}$ sparse with $\sim0.5$ of the entries $=0$. 

\paragraph{Input Sparsity} ReLUs also introduce sparsity if the linear layer in question is preceded by a ReLU activation. Here, $\mX = \relu(\tilde{\mZ})$ will again be sparse with $\sim0.5$ of the entries $=0$ at initialization.

Note that for all but the first and the last layer of a fully connected network, we have sparsity in both, $\mX$ and $\frac{\partial \mathcal{L}} {\partial \mZ}$. 
Due to the symmetry of their formulas in \cref{theorem:q}, our method can be applied in all three arising sparsity settings. 
In the remainder of this work, we assume w.l.o.g. that only $\frac{\partial \mathcal{L}} {\partial \mZ}$ is sparse, corresponding to the first layer of a fully connected network. We now describe how to leverage this sparsity to compute the \nameQ $\mQ$ and thus recover the input batch $\mX$. 

\subsection{Breaking Aggregation through Sparsity} \label{sec:breaking_aggregation}
Our exact recovery algorithm for the \nameQ $\mQ$ is based on the following insight:

If we can construct two submatrices $\mA \in \mathbb{R}^{b - 1 \times b}$ and $\mL_A \in \mathbb{R}^{b-1\times b}$ by choosing $b-1$ rows with the same indices from $\frac{\partial \mathcal{L}}{\partial \mZ}$ and $\mL$, respectively, such that $\mA$ has full rank and an all-zero $i^\text{th}$ column, then the kernel $\ker(\mL_\mA)$ of $\mL_\mA$ contains a column $\vq_i$ of $\mQ$ up to scale. We formalize this as follows:

\begin{restatable}{theorem}{dirs} \label{theorem:dirs}
	Let $\mA \in \mathbb{R}^{b-1 \times b}$ be a submatrix of $\frac{\partial \mathcal{L}}{\partial \mZ}$ s.t. its $i^\text{th}$ column is $\mathbf{0}$ for some $i \in \{1, \dots, b\}$. 
	Further, let $\frac{\partial \mathcal{L}}{\partial \mZ}$, $\mX$, and $\mA$ be of full rank and $\mQ$ be as in \cref{theorem:q}. 
	Then, there exists a full-rank submatrix $\mL_A \in \mathbb{R}^{b-1\times b}$ of $\mL$ s.t. $\Span(\vq_i) = \ker(\mL_A)$ for the $i^\text{th}$ column $\vq_i$ of $\mQ = (\vq_1 | \cdots | \vq_b)$. 
\end{restatable}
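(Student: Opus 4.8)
The plan is to peel off the relevant rows from the identity $\frac{\partial \mathcal{L}}{\partial \mZ} = \mL\mQ$ of \cref{theorem:q} and then argue purely by rank--nullity.

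First I would fix notation: let $S \subseteq \{1,\dots,m\}$ with $|S| = b-1$ be the index set of the rows chosen to form $\mA$ from $\frac{\partial \mathcal{L}}{\partial \mZ}$, and let $\mL_A \in \mathbb{R}^{b-1\times b}$ be the submatrix of $\mL$ obtained by selecting the \emph{same} rows $S$. Restricting the matrix identity $\frac{\partial \mathcal{L}}{\partial \mZ} = \mL\mQ$ to the rows in $S$ immediately gives $\mA = \mL_A\mQ$, since row selection commutes with left-multiplying $\mQ$.

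Next I would establish that $\mL_A$ has full rank $b-1$. Since $\mQ$ is invertible by \cref{theorem:q}, multiplication by $\mQ^{-1}$ preserves rank, so $\rank(\mL_A) = \rank(\mA\mQ^{-1}) = \rank(\mA) = b-1$ by the full-rank hypothesis on $\mA$. By rank--nullity applied to $\mL_A \colon \mathbb{R}^{b} \to \mathbb{R}^{b-1}$, this yields $\dim\ker(\mL_A) = b - (b-1) = 1$. Then I would identify the kernel: the $i^\text{th}$ column of $\mA = \mL_A\mQ$ equals $\mL_A\vq_i$, and this column is $\mathbf{0}$ by hypothesis, so $\vq_i \in \ker(\mL_A)$; since $\mQ$ is invertible, $\vq_i \neq \mathbf{0}$, so the one-dimensional space $\Span(\vq_i)$ coincides with the one-dimensional space $\ker(\mL_A)$, which is the claim.

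The argument is essentially a one-liner once the rows are lined up correctly; the only thing requiring care is the bookkeeping --- ensuring $\mA$ and $\mL_A$ are cut from $\frac{\partial \mathcal{L}}{\partial \mZ}$ and $\mL$ along the very same row set, so that $\mA = \mL_A\mQ$ holds exactly --- together with invoking the correct hypotheses ($\mQ$ exists and is invertible via \cref{theorem:q}, and $\mA$ has full rank) rather than anything genuinely delicate. I do not expect a real obstacle in this proof; the substantive difficulty of the overall method lies in the \emph{existence} of such a zero-column, full-rank submatrix $\mA$, which this theorem takes as a given hypothesis and which is addressed separately via the sparsity analysis.
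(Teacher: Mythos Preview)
Your proof is correct and essentially identical to the paper's own argument: both select $\mL_A$ from $\mL$ along the same row set as $\mA$, use $\mA = \mL_A\mQ$ from \cref{theorem:q}, deduce $\rank(\mL_A)=b-1$ via invertibility of $\mQ$, and conclude $\ker(\mL_A)=\Span(\vq_i)$ by rank--nullity together with $\mL_A\vq_i = \mA\ve_i = \mathbf{0}$. Your remark that the substantive difficulty lies in the existence of such an $\mA$ (handled separately by the sparsity analysis) is also exactly right.
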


\begin{proof}
	Pick an $i \in \{1, \dots, b\}$. 
	By assumption, there exists a submatrix $\mA \in \mathbb{R}^{b-1 \times b}$ of $\frac{\partial \mathcal{L}}{\partial \mZ}$ of rank $b-1$ whose $i^\text{th}$ column is $\mathbf{0}$. 
	To construct $\mL_\mA$, we take rows from $\mL$ with indices corresponding to $\mA$'s row indices in $\frac{\partial \mathcal{L}}{\partial \mZ}$. As $\frac{\partial \mathcal{L}}{\partial \mZ}$ and $\mX$ have full rank, by \cref{theorem:q}, we know that $\tfrac{\partial \mathcal{L}}{\partial \mZ} = \mL \mQ$, and hence $\mA = \mL_\mA \mQ$. Multiplying from the right with $\ve_i$ yields $0 = \mA \ve_i = \mL_\mA \mQ \ve_i = \mL_\mA \vq_i$,
	and hence $\ker(\mL_\mA) \supseteq \Span(\vq_i)$. 
	Further, as $\rank(\mA) = b-1$ and $\rank(\mQ) = b$, we have that $\rank(\mL_\mA) = b-1$. By the rank-nullity theorem $\dim(\ker(\mL_\mA)) = 1$ and hence $\ker(\mL_\mA) = \Span(\vq_i)$.
\end{proof}

As $\frac{\partial \mathcal{L}}{\partial \mZ}$ is not known a priori, we can not simply search for such a set of rows. Instead, we have to sample submatrices $\mL_\mA$ of $\mL$ at random and then filter them using the approach discussed in \cref{sec:filter_validate}. However, we will show in \cref{sec:sampling_analysis} that we will find suitable submatrices with high probability for $b<m$ due to the sparsity of $\tfrac{\partial \mathcal{L}}{\partial \mZ}$ and the large number $\binom{m}{b-1}$ of possible submatrices. We will now discuss how to recover the scale of the columns $\vq_i$ given their unscaled directions $\overline{\vq}_i$ forming $\overline{\mQ}$.

\subsection{Obtaining $\pmb{Q}$: Recovering the Scale of columns in $\overline{\pmb{Q}}$}\label{sec:scale}

Given a set of $b$ correct directions $\overline{\mQ} = (\overline{\vq_1} | \cdots | \overline{\vq_b})$, we can recover their scale, enabling us to reconstruct $\mX$, as follows. We first represent the correctly scaled columns as $\vq_i = s_i \cdot \overline{\vq_i}$ with the unknown scale parameters $s_i\in\mathbb{R}$.
Now, recovering the scale is equivalent to computing all $s_i$. To this end, we leverage the gradient w.r.t. the bias $\frac{\partial \mathcal{L}}{\partial \vb}$:

\begin{restatable}{theorem}{bias} \label{lemma:bias}
	The gradient w.r.t. the bias $\vb$ can be written in the form
	$\frac{\partial \mathcal{L}} {\partial \vb} = \frac{\partial \mathcal{L}} {\partial \mZ} \begin{bsmallmatrix} 1 \\ \vphantom{\int\limits^x}\smash{\vdots} \\ 1 \end{bsmallmatrix}$.
\end{restatable}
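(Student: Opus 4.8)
The plan is to derive this identity directly from the chain rule, mirroring the proof of \cref{theorem:dLdW_dLdZ.XT}. Recall that $\mZ = \mW \mX + (\vb \mid \cdots \mid \vb)$, so each column satisfies $\vz_j = \mW \vx_j + \vb$. Hence $\frac{\partial \vz_j}{\partial \vb} = \mI_m$ for every $j \in \{1, \dots, b\}$, since the same bias vector is broadcast across all $b$ columns of the batch.

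Applying the multivariate chain rule, the loss $\mathcal{L}$ depends on $\vb$ only through $\mZ$, so
\begin{equation*}
	\frac{\partial \mathcal{L}}{\partial \vb} = \sum_{j=1}^{b} \left(\frac{\partial \vz_j}{\partial \vb}\right)^{\!\top} \frac{\partial \mathcal{L}}{\partial \vz_j} = \sum_{j=1}^{b} \frac{\partial \mathcal{L}}{\partial \vz_j}.
\end{equation*}
The last expression is exactly the sum of the columns of $\frac{\partial \mathcal{L}}{\partial \mZ}$, which can be written as the matrix-vector product $\frac{\partial \mathcal{L}}{\partial \mZ} \bm{1}$ with $\bm{1} = (1, \dots, 1)^\top \in \mathbb{R}^b$. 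This is precisely the claimed form.

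There is no real obstacle here; the only thing to be careful about is the bookkeeping of the broadcasting operation $(\vb \mid \cdots \mid \vb)$ and making sure the summation over batch elements is handled correctly (as opposed to, say, in the weight-gradient case where one gets an outer-product structure $\frac{\partial \mathcal{L}}{\partial \mZ}\mX^\top$ rather than a plain sum). If one wants to be fully explicit, one can phrase the broadcasting as $(\vb \mid \cdots \mid \vb) = \vb \bm{1}^\top$ and then note that $\frac{\partial}{\partial \vb}\big(\mW\mX + \vb\bm{1}^\top\big)$ contracted against $\frac{\partial \mathcal{L}}{\partial \mZ}$ gives $\frac{\partial \mathcal{L}}{\partial \mZ}\bm{1}$ by the same reasoning that yields $\frac{\partial \mathcal{L}}{\partial \mW} = \frac{\partial \mathcal{L}}{\partial \mZ}\mX^\top$, just with $\mX^\top$ replaced by the all-ones column $\bm{1}$. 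This completes the proof.
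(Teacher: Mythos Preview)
Your proof is correct and takes essentially the same approach as the paper: both apply the chain rule through $\mZ$ and observe that differentiating the broadcast bias term yields a simple sum over batch columns. The only difference is notational---the paper uses Einstein index notation while you work column-wise---but the underlying argument is identical.
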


Thus, the coefficients $s_i$ can be calculated as:

\begin{restatable}{theorem}{scaling} \label{theorem:scaling}
	For any left inverse $\mL^{-L}$ of $\mL$, we have
		$\begin{bsmallmatrix} s_1 \\ \vphantom{\int\limits^x}\smash{\vdots} \\ s_b \end{bsmallmatrix} = \overline{\mQ}\,^{-1} \mL^{-L} \frac{\partial \mathcal{L}} {\partial \vb} $
\end{restatable}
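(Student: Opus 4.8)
The plan is to chain together the structural identities already in hand. Write $\vs = (s_1, \dots, s_b)^\top$ for the vector of unknown scales, so that by definition $\vq_i = s_i \overline{\vq}_i$, i.e.\ $\mQ = \overline{\mQ}\,\diag(\vs)$. First I would invoke \cref{lemma:bias} to rewrite the bias gradient as $\frac{\partial \mathcal{L}}{\partial \vb} = \frac{\partial \mathcal{L}}{\partial \mZ}\,\vone$, where $\vone \in \R^b$ is the all-ones vector. Then, using \cref{theorem:q}, substitute $\frac{\partial \mathcal{L}}{\partial \mZ} = \mL\mQ = \mL\,\overline{\mQ}\,\diag(\vs)$ and observe that $\diag(\vs)\,\vone = \vs$. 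This collapses everything to the single identity $\frac{\partial \mathcal{L}}{\partial \vb} = \mL\,\overline{\mQ}\,\vs$.

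It then remains to strip $\mL$ and $\overline{\mQ}$ off the left. Since $\mL \in \R^{m\times b}$ comes from the reduced SVD as $\mL = \mU\mS^{1/2}$ with $\mS$ invertible of size $b$ and $b \le m$, it has full column rank $b$ and hence admits a left inverse $\mL^{-L} \in \R^{b\times m}$; moreover $\mL^{-L}\mL = \mI_b$ for \emph{every} such left inverse, which is precisely why the statement does not depend on the choice of $\mL^{-L}$. Left-multiplying by $\mL^{-L}$ gives $\mL^{-L}\frac{\partial \mathcal{L}}{\partial \vb} = \overline{\mQ}\,\vs$. Finally, $\overline{\mQ}$ is invertible: each $s_i \neq 0$ (otherwise $\mQ$ would be rank-deficient, contradicting \cref{theorem:q}), so $\overline{\mQ} = \mQ\,\diag(s_1^{-1},\dots,s_b^{-1})$ is a product of invertible matrices. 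Left-multiplying by $\overline{\mQ}^{-1}$ yields $\overline{\mQ}^{-1}\mL^{-L}\frac{\partial \mathcal{L}}{\partial \vb} = \vs$, which is the claim.

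There is no real obstacle here — the argument is a one-line computation once \cref{lemma:bias}, \cref{theorem:q}, and the factorization $\mQ = \overline{\mQ}\,\diag(\vs)$ are lined up. The only points that warrant a sentence of care are the two elementary rank facts invoked above: that $\mL$ has a left inverse with $\mL^{-L}\mL = \mI_b$ independently of the choice (from $b \le m$ and the full-column-rank SVD factor), and that $\overline{\mQ}$ inherits invertibility from $\mQ$ via nonzero column rescalings.
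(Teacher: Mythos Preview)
Your proof is correct and follows essentially the same approach as the paper: both invoke \cref{lemma:bias} to write $\frac{\partial \mathcal{L}}{\partial \vb} = \frac{\partial \mathcal{L}}{\partial \mZ}\vone$, substitute $\frac{\partial \mathcal{L}}{\partial \mZ} = \mL\mQ = \mL\,\overline{\mQ}\,\diag(\vs)$ from \cref{theorem:q}, and then cancel $\mL^{-L}\mL$ and $\overline{\mQ}^{-1}\overline{\mQ}$. The only cosmetic difference is that the paper computes the chain starting from the right-hand side and simplifying, whereas you build the identity $\frac{\partial \mathcal{L}}{\partial \vb} = \mL\,\overline{\mQ}\,\vs$ first and then invert; your added remarks on why $\mL$ admits a left inverse and why $\overline{\mQ}$ is invertible are sound and slightly more explicit than the paper's version.
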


\cref{theorem:scaling} allows us to directly obtain the true matrix $\mQ = \overline{\mQ} \diag(s_1, \ldots, s_b)$ from the unscaled matrix $\overline{\mQ}$. We now discuss how to recover $\overline{\mQ}$ via sampling and filtering candidate directions $\overline{\vq}_i$.

	\section{Efficient Filtering and Validation of Candidates}
\label{sec:filter_validate}

In the previous section, we saw that given the correct selection of submatrices $\mL_\mA$, we can recover $\mQ$ directly. However, we do not know how to pick $\mL_\mA$ a priori. To solve this, we rely on a sampling approach: We first randomly sample submatrices $\mL_\mA$ of $\mL$ and corresponding direction candidates $\overline{\vq}'$ spanning $\ker(\mL_\mA)$. 
However, checking whether $\overline{\vq}'$ is a valid direction is not straightforward as we do not know $\frac{\partial \mathcal{L}}{\partial \mZ}$ and hence can not observe $\mA$ directly as reconstructing $\frac{\partial \mathcal{L}}{\partial \mZ} = \mL \mQ$ requires the full $\mQ$. 

To address this, we filter the majority of wrong proposals $\overline{\vq}'$ using deduplication and a sparsity-based criterion (\cref{sec:filtering}), leaving us with a set of candidate directions $\mathcal{C} = \{\overline{\vq}_j'\}_{j \in \{1, \dots, c\}}$. We then select the correct directions in $\mathcal{C}$ greedily based on a novel sparsity matching score (\cref{sec:greedy_optimization}).

\subsection{Efficient Filtering of Directions $\overline{\pmb{q}}'$}
\label{sec:filtering}

\paragraph{Filtering Mixtures via Sparsity}
It is highly likely ($p=(1-\tfrac{1}{2^{b-1}})^b$) that a random submatrix of $\mL$ will not correspond to an $\mA$ with any $\mathbf{0}$ column. We filter these directions by leveraging the following insight. The kernel of such submatrices is spanned by a linear combination $\overline{\vq}' = \sum_i \alpha_i \overline{\vq}_i$. Thus $\mL \overline{\vq}'$ will be a linear combination of sparse columns of $\frac{\partial \mathcal{\mL}}{\partial \mZ}$. As this sparsity structure is random, linear combinations will have much lower sparsity with high probability. 
We thus discard all candidates $\overline{\vq}'$ with sparsity of $\mL \overline{\vq}'$ below a threshold $\tau$, chosen to make the probability of falsely rejecting a correct direction $p_{fr}(\tau,m) = \frac{1}{2^m} \sum_{i=0}^{\lfloor m \cdot \tau \rfloor} \binom{m}{i}$, obtained from the cumulative distribution function of the binomial distribution, small. For example for $m=400$ and $p_{fr}(\tau,m) < 10^{-5}$, we have $\tau = 0.395$.
We obtain the candidate pool $\mathcal{C} = \{\overline{\vq}_j'\}_{j \in \{1, \dots, c\}}$ from all samples that were not filered this way.

\paragraph{Filtering Duplicates}
As it is highly likely to have multiple full-rank submatrices $\mA$, whose $i^\text{th}$ column is $\mathbf{0}$, we expect to sample the same proposal $\overline{\vq}_i'$ multiple times. We remove these duplicates to substantially reduce our search space.

\subsection{Greedy Optimization} \label{sec:greedy_optimization}
While filtering duplicates and linear combinations significantly reduces the number $c$ of candidates, we usually still have to select a subset of $b<c$. 
Thus, we have $\binom{c}{b}$ possible $b$ sized subsets, each inducing a candidate $\mQ'$ and thus $\mX'$. 
A naive approach is to compute the gradients for all $\mX'$ and compare them to the ground truth. However, this is computationally infeasible even for moderate $c$. 

To address this, we propose a greedy two-stage procedure optimizing a novel sparsity matching score $\lambda$, which resolves the computational complexity issue above while also accurately selecting the correct batch elements and relying solely on $\frac{\partial \mathcal{L}}{\partial \mZ}'$ and $\mZ'$. As both can be computed directly via $\mQ'$, the procedure is local and does not need to backpropagate gradients. 
Next, we explain the first stage.

\vspace{-1mm}
\paragraph{Dictionary Learning \citep{spielman2012exact}}
As a first stage, we leverage a component of the algorithm proposed by \citet{spielman2012exact} for sparsely-used dictionary learning. This approach is based on the insight that the subset of column vectors $\mathcal{B}=\{\overline{\vq}_i'\}_{i=1}^b$, yielding the sparsest full-rank gradient matrix $\frac{\partial L}{\partial \mZ}$ is often correct. As the scaling of $\overline{\vq}_i'$ does not change the sparsity of the resulting $\frac{\partial L}{\partial \mZ}$, we can construct the subset $\mathcal{B}$ by greedily collecting the $b$ directions $\overline{\vq}_i'$ with the highest corresponding sparsity that still increase the rank of $\mathcal{B}$. While this method typically recovers most directions $\overline{\vq}_i$, it often misses directions whose gradients $\frac{\partial L}{\partial \vz_i}$ are less sparse by chance.

\vspace{-1mm}
\paragraph{Sparsity Matching}
We alleviate this issue by introducing a second stage to the algorithm where we greedily optimize a novel correctness measure based solely on the gradients of the linear layer, which we call the sparsity matching coefficient $\lambda$.

\begin{definition} \label{def:gamma}
    Let $\lambda_{-}$ be the number of non-positive entries in $\mZ$ whose corresponding entries in $\frac{\partial \mathcal{L}}{\partial \mZ}$ are $0$. Similarly, let $\lambda_{+}$ be the number of positive entries in $\mZ$ whose corresponding entries in $\frac{\partial \mathcal{L}}{\partial \mZ}$ are not $0$. We call their normalized sum the \emph{sparsity matching coefficient} $\lambda$:
    \begin{equation*}
        \lambda = \frac{\lambda_{-} + \lambda_{+}}{m\cdot b}.
    \end{equation*}
    \vspace{-5mm}
\end{definition}

Intuitively, this describes how well the pre-activation values $\mZ$ match the sparsity pattern of the gradients $\frac{\partial \mathcal{L}}{\partial \mZ}$ induced by the ReLU layer (See \cref{sec:sparsity}). While this sparsity matching coefficient $\lambda$ can take values between $0$ and $1$, it is exactly $\lambda=1$ for the correct $\mX$, if the gradient $\frac{\partial \mathcal{L}}{\partial \mY}$ w.r.t. the ReLU output is dense, which is usually the case. We note that $\lambda$ can be computed efficiently for arbitrary full rank matrix $\overline{\mQ}'$ by computing $\frac{\partial \mathcal{L}}{\partial \mZ}' = \mL \mQ'$ and $\mZ' = \mW  \mX' + (\vb | \dots | \vb)$ for $\mX'^\top = \mQ'^{-1} \mR$.

To optimize $\lambda$, we initialize $\overline{\mQ}'$ with the result of the greedy algorithm in \citet{spielman2012exact}, and then greedily swap the pair of vectors $\overline{\vq}'_i$ improving $\lambda$ the most, while keeping the rank, until convergence.

	\vspace{-1mm}
\section{Final Algorithm and Complexity Analysis} \label{sec:algorithm}
\vspace{-1mm}
In this section, we first present our final algorithm \tool (\cref{sec:final_algorithm}) and then analyse its expected complexity and failure probability (\cref{sec:sampling_analysis}).

	\begin{wrapfigure}{R}{0.61\textwidth}
	\begin{minipage}{0.61\textwidth}
		\vspace{-2.8em}
		\begin{algorithm}[H]
			\caption{\tool}
			\label{alg:gradient_dismantle}
			\begin{algorithmic}[1]
				\Function{\tool{}}{ m, n, $\mW$, $\vb$, $\frac{\partial \mathcal{L}}{\partial\mW}$, $\frac{\partial \mathcal{L}}{\partial\vb}$}
					\State $\mL, \mR, b \leftarrow \;$\textsc{LowRankDecompose}\xspace$(\frac{\partial \mathcal{L}}{\partial\mW})$ \label{alg_l:decomp}
					\For{$i=1$ {\bfseries to} $N$}
						\State Sample a submatrix $\mL_\mA \in \mathbb{R}^{b-1 \times b}$ of $\mL$ 		\label{alg_l:sample} 
						\State $\overline{\vq}_i' \leftarrow \ker(\mL_\mA)$								\label{alg_l:kern}
						\If{$\sparsity(\mL\overline{\vq}_i') \geq \tau * m$ {\bfseries and } $\overline{\vq}_i' \notin \mathcal{C}$} 	\label{alg_l:sparsity_filter}
							\State $\mathcal{C}\leftarrow \mathcal{C} \cup \{\overline{\vq}_i'\}$			\label{alg_l:candidate_pool}
							\State $\lambda, \mX' \leftarrow\;$\greedyfilter$(\mL, \mR, \mW, \vb, \frac{\partial \mathcal{L}}{\partial\vb}, \mathcal{C})$\label{alg_l:early_check}
							\If{$\lambda = 1$} \label{alg_l:early_stop}
								\State {\bfseries return} $\mX'$
							\EndIf
						\EndIf
					\EndFor
					\State $\lambda, \mX' \leftarrow\;$\greedyfilter$(\mathcal{C})$ \label{alg_l:final_check}
					\State \Return $\mX'$
				\EndFunction
			\end{algorithmic}
		\end{algorithm}
		\vspace{-3em}
	\end{minipage}
\end{wrapfigure}

\subsection{Final Algorithm}\label{sec:final_algorithm}
\vspace{-1mm}
We formalize our gradient inversion attack \tool in \cref{alg:gradient_dismantle} and outline it below.
First, we compute the low-rank decomposition $\frac{\partial \mathcal{L}}{\partial \mW} = \mL \mR$ of the weight gradient $\frac{\partial \mathcal{L}}{\partial \mW}$ via reduced SVD, allowing us to recover the batch size $b$ as the rank of $\frac{\partial \mathcal{L}}{\partial \mW}$ (\cref{alg_l:decomp}).
We now sample (at most $N$) submatrices $\mL_\mA$ of $\mL$ and compute proposal directions $\overline{\vq}_i'$ as their kernel $\ker(\mL_\mA)$ via SVD (\crefrange{alg_l:sample}{alg_l:kern}). We note that our implementation parallelizes both sampling and SVD computation (\crefrange{alg_l:sample}{alg_l:kern}) on a GPU.
We then filter the proposal directions $\overline{\vq}_i'$ based on their sparsity (\cref{alg_l:sparsity_filter}), adding them to our candidate pool $\bc{C}$ if they haven't been recovered already and are sufficiently sparse (\cref{alg_l:candidate_pool}).
Once our candidate pool contains at least $b$ directions, we begin constructing candidate input reconstructions $\mX'$ using our two-stage greedy algorithm \textsc{GreedyFilter} (\cref{alg_l:early_check}), discussed in \cref{sec:greedy_optimization}. If this reconstruction leads to a solution with sparsity matching coefficient $\lambda=1$, we terminate early and return the corresponding solution (\cref{alg_l:early_stop}). Otherwise, we continue sampling until we have reached $N$ samples and return the best reconstruction we can obtain from the resulting candidate pool (\cref{alg_l:final_check}).
The pseudocode for \textsc{ComputeSigma} (\cref{alg:compute_sigma}) and \textsc{GreedyFilter} (\cref{alg:greedy_filter}) are shown in \cref{app:algos}.

\vspace{-1mm}
\subsection{Analysis}
\label{sec:sampling_analysis}
\vspace{-1mm}
In this section, we will analyze \tool w.r.t. the number of submatrices we \emph{expect} to sample until we have recovered all $b$ correct directions $\overline{\vq}_i$ (\cref{lemma:expected_samples}), and the probability of failing to recover all $b$ correct directions despite checking all possible submatrices of $\mL$ (\cref{lemma:failure_prob}). For an analysis of the number of submatrices we have to sample until we have recovered all $b$ correct directions $\overline{\vq}_i$ \emph{with high probability}, we point to \cref{lemma:high_prob_samples}. Further, as before, we defer all proofs also to \cref{app:proofs}. 

\paragraph{Expected Number of Required Samples}

\begin{wrapfigure}{R}{0.53\textwidth}
	\begin{minipage}{0.53\textwidth}
		\vspace{-2.6em}
		\begin{figure}[H]
			\centering
			\includegraphics[width=\linewidth]{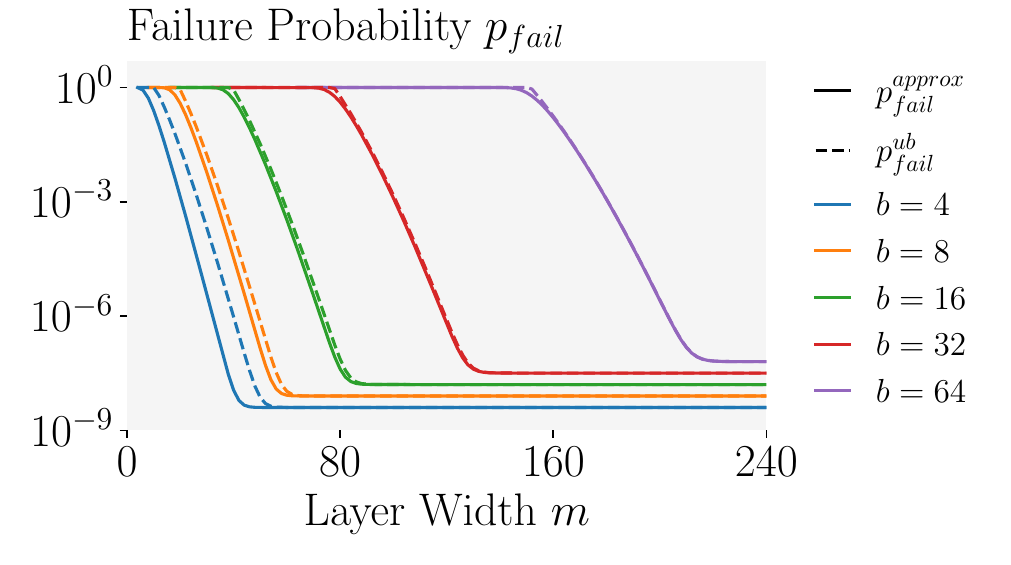}
			\vspace{-7mm}
			\caption{Visualizations of the upper bound ($p_{\text{fail}}^\text{ub}$, dashed) on and approximation of ($p_{\text{fail}}^\text{approx}$, solid) the failure probability of \tool for different batch sizes $b$ and network widths $m$ for $p_{fr} = 10^{-9}$.}
			\label{fig:failure_prob}
		\end{figure}
		\vspace{-3.4em}
	\end{minipage}
\end{wrapfigure}

To determine the expected number of required samples until we have recovered the correct $b$ direction vectors $\overline{\vq}_i$, we first compute a lower bound on the probability $q$ of sampling a submatrix which satisfies the conditions of \cref{theorem:dirs} for an arbitrary column $i$ in $\overline{\mQ}$ and then use the coupon collector problem to compute the expected number of required samples.

We can lower bound the probability of a submatrix $\mA \in \R^{b-1 \times b}$, randomly sampled as $b-1$ rows of $\tfrac{\partial \mathcal{L}}{\partial \mZ}$, having exactly one all-zero column and being full rank as follows:

\begin{restatable}{lemma}{successprob} \label{lemma:success_prob}
	Let $\mA \in \R^{b-1 \times b}$ be submatrix of the gradient $\tfrac{\partial \mathcal{L}}{\partial \mZ}$ obtained by sampling $b-1$ rows uniformly at random without replacement, where each element of $\tfrac{\partial \mathcal{L}}{\partial \mZ}$ is distributed i.i.d. as $\tfrac{\partial \mathcal{L}}{\partial \mZ_{j,k}} = \zeta |\epsilon|$ with $\epsilon \sim \mathcal{N}(\mu=0,\sigma^2>0)$ and $\zeta \sim \text{Bernoulli}(p=\tfrac{1}{2})$. We then have the probability $q$ of $\mA$ having exactly one all-zero column and being full rank lower bounded by:
	\begin{equation*}
		q \geq \frac{b}{2^{b-1}} \left(1 - (\tfrac{1}{2} + o_{b-1}(1))^{b-1}\right)
		\geq \frac{b}{2^{b-1}} (1-0.939^{b-1}).
	\end{equation*}
\end{restatable}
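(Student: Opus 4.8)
The plan is to reduce the target event to the non-singularity of a smaller \emph{square} matrix with i.i.d.\ entries, and then to bound that non-singularity probability by passing to a perfect-matching question about the random sparsity pattern. \textbf{Step 1 (reduction to a square sub-matrix).} For $i\in\{1,\dots,b\}$ let $E_i$ be the event that the $i^\text{th}$ column of $\mA$ is $\mathbf{0}$ \emph{and} the $(b-1)\times(b-1)$ matrix $\mA_{-i}$ obtained by deleting that column is non-singular. The events $E_i$ are pairwise disjoint; on $E_i$ the matrix $\mA$ has exactly one all-zero column and $\rank\mA=\rank\mA_{-i}=b-1$; and conversely ``exactly one all-zero column, being the $i^\text{th}$, together with full rank'' forces $E_i$ (since $\rank\mA=\rank\mA_{-i}$). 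Hence $q=\sum_{i=1}^b\P(E_i)$. As the entries of $\tfrac{\partial\mathcal L}{\partial\mZ}$ are i.i.d., the event ``column $i$ of $\mA$ is $\mathbf{0}$'' is governed by $b-1$ independent $\mathrm{Bernoulli}(\tfrac12)$ masks and is independent of $\mA_{-i}$, so $\P(E_i)=2^{-(b-1)}p_{b-1}$, where $p_N$ denotes the probability that an $N\times N$ matrix with i.i.d.\ entries distributed as $\zeta|\epsilon|$ is non-singular. By symmetry this gives the exact identity $q=\tfrac{b}{2^{b-1}}p_{b-1}$, so it suffices to show $p_N\ge 1-(\tfrac12+o_N(1))^N$ with $\tfrac12+o_N(1)\le 0.939$ (then apply with $N=b-1$).

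\textbf{Step 2 (non-singularity via bipartite matching).} Condition on the $0/1$ support pattern $\mM\in\{0,1\}^{N\times N}$ of the matrix, whose entries are i.i.d.\ $\mathrm{Bernoulli}(\tfrac12)$; given $\mM$, the nonzero entries are i.i.d.\ with an atomless law. If the bipartite graph with bi-adjacency matrix $\mM$ has a perfect matching $\pi$, then $\det$, viewed as a polynomial in the nonzero entries, contains the monomial $\prod_j(\cdot)_{j,\pi(j)}$ with coefficient $\sign(\pi)\neq 0$, and no other permutation contributes this monomial, so $\det$ is not the zero polynomial; since an atomless product measure assigns zero mass to the vanishing set of a nonzero polynomial, the matrix is a.s.\ non-singular conditionally on such an $\mM$. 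Therefore $1-p_N\le\P(\mM\text{ has no perfect matching})$, where $\mM$ is the bi-adjacency matrix of the random bipartite graph $G(N,N,\tfrac12)$.

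\textbf{Step 3 (the no-matching probability, and the explicit constant).} By Hall's theorem, $\mM$ has no perfect matching iff some nonempty set $S$ of $s$ rows satisfies $|N(S)|\le s-1$; picking $T\supseteq N(S)$ with $|T|=s-1$ forces the $s(N-s+1)$ entries in rows $S$ and columns $[N]\setminus T$ to vanish, so a union bound gives
\begin{equation*}
\P(\text{no perfect matching})\;\le\;\sum_{s=1}^{N}\binom{N}{s}\binom{N}{s-1}\,2^{-s(N-s+1)},
\end{equation*}
in which the $s=1$ and $s=N$ terms (an all-zero row, resp.\ column) each equal $N\,2^{-N}$ and dominate, the remaining terms summing to $o(N\,2^{-N})$; since $2N\,2^{-N}=(\tfrac12+o_N(1))^N$, this yields $p_N\ge 1-(\tfrac12+o_N(1))^N$. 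To turn this into $1-p_N\le 0.939^N$ for \emph{every} $N\ge 1$, I would (i) bound the few small $N$ for which the crude sum above still exceeds $0.939^N$ by direct computation (for instance $1-p_1=\tfrac12$ and $1-p_2=\tfrac{9}{16}$, which lie below $0.939$ and $0.939^2$ respectively), and (ii) for the remaining $N$ verify that the displayed sum is below $0.939^N$, which holds with room to spare once the binomial factors are absorbed into the exponential term $2^{-s(N-s+1)}$. I expect this last part --- promoting the classical asymptotic $\P(\text{no PM})=(\tfrac12+o(1))^N$ to an explicit bound valid simultaneously for all $N$, which needs both the case-split union bound over Hall violators and a direct check of the small dimensions --- to be the main obstacle; Steps 1 and 2 are routine once the perfect-matching reduction is identified.
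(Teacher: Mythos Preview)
Your argument is correct and, in Step~1 and Step~2, more carefully justified than the paper's own proof. The paper simply cites Tikhomirov (2020) and Tao--Vu (2005) for the singularity probability of the $(b-1)\times(b-1)$ Bernoulli $0/1$ support matrix $\mathds{1}_{[\mA>0]}$ and treats full-rankness of $\mA$ as controlled by full-rankness of its support, without spelling out why that implication holds. You instead make the link explicit via the perfect-matching reduction in Step~2, and then attack $\P(\text{no perfect matching in }G(N,N,\tfrac12))$ directly with Hall's theorem and a union bound. Your route is more self-contained and recovers the $(\tfrac12+o(1))^N$ asymptotic cleanly; the paper's route is shorter because the two citations absorb all of the work, including the uniform constant $0.939$. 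One remark that lets you avoid the case-analysis you flag as the main obstacle: since ``no perfect matching'' forces the $0/1$ support matrix to be singular (its determinant expands as a signed sum over supported permutations), any bound on the singularity probability of a random Bernoulli matrix---in particular the Tao--Vu bound the paper invokes---already upper-bounds your $\P(\text{no PM})$, so you can import $0.939^{b-1}$ directly rather than re-deriving it from the Hall union bound.
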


We can now compute the expected number of submatrices $n^*_{\text{total}}$ we have to draw until we have recovered all $b$ correct direction vectors using the Coupon Collector Problem:

\begin{restatable}{lemma}{expectedsamples} \label{lemma:expected_samples}
	Assuming i.i.d.~submatrices $\mA$ following the distribution outlined in \cref{lemma:success_prob} and using \cref{alg:gradient_dismantle}, we have the expected number of submatrices $n^*_{\text{total}}$ required to recover all $b$ correct direction vectors as:
	\begin{equation*}
		n^*_{\text{total}} = \frac{1}{q} \sum_{k=0}^{b-1} \frac{b}{b-k} = \frac{b H_b}{q} \approx \frac{1}{q} (b\log(b) + \gamma b + \tfrac{1}{2}),
	\end{equation*}
	where $H_b$ is the $b^\text{th}$ harmonic number and $\gamma \approx 0.57722$ the Euler-Mascheroni constant.
\end{restatable}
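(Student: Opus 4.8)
The plan is to recognize this as a direct instance of the Coupon Collector Problem, where the "coupons" are the $b$ correct direction vectors $\overline{\vq}_i$, each of which we wish to collect at least once. The key conceptual step is to argue that each successfully-sampled submatrix $\mA$ (i.e. one that has exactly one all-zero column and is full rank, happening with probability $\geq q$ per \cref{lemma:success_prob}) yields one of the $b$ directions \emph{uniformly at random}. This uniformity follows from the symmetry of the sampling: since the $b-1$ rows are drawn uniformly without replacement and the columns of $\tfrac{\partial \mathcal{L}}{\partial \mZ}$ are i.i.d., conditioned on exactly one column being all-zero, that column is equally likely to be any of the $b$ indices. By \cref{theorem:dirs}, the corresponding submatrix $\mL_\mA$ of $\mL$ then has $\ker(\mL_\mA) = \Span(\vq_i)$, so we recover direction $\overline{\vq}_i$.

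Next I would set up the coupon collector count on the \emph{successful} draws. Let $T$ be the number of successful submatrices needed to collect all $b$ directions. By the standard coupon collector argument, $T = \sum_{k=0}^{b-1} T_k$ where $T_k$ is the number of successful draws to go from $k$ distinct directions to $k+1$; each $T_k$ is geometric with success probability $\tfrac{b-k}{b}$, so $\E[T_k] = \tfrac{b}{b-k}$ and $\E[T] = \sum_{k=0}^{b-1} \tfrac{b}{b-k} = b H_b$. Then I would compose this with the number of total samples needed to obtain each successful draw: since each of the $N$ i.i.d.\ submatrices is successful independently with probability $\geq q$, a Wald-type identity (the expected total number of samples equals the expected number of successes times $1/q$, valid because $T$ is a stopping time for the i.i.d.\ success/failure sequence) gives $n^*_{\text{total}} = \E[T]/q = b H_b / q$. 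Finally, I would invoke the well-known asymptotic expansion $H_b = \log b + \gamma + \tfrac{1}{2b} + O(b^{-2})$, so that $b H_b = b\log b + \gamma b + \tfrac{1}{2} + O(b^{-1})$, yielding the stated approximation.

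The main obstacle — or rather the point requiring the most care — is justifying the uniformity of which direction is recovered by a successful draw, and the independence assumptions underlying the Wald/coupon-collector composition. One must be careful that \cref{lemma:success_prob} lower-bounds the probability of the event "$\mA$ has exactly one all-zero column \emph{and} is full rank," and that within this event the index of the zero column is uniform on $\{1,\dots,b\}$; this relies on the exchangeability of the columns under the i.i.d.\ model for $\tfrac{\partial \mathcal{L}}{\partial \mZ}$ assumed in \cref{lemma:success_prob}. I would also note that strictly speaking using $q$ as an exact success probability (rather than a lower bound) makes $n^*_{\text{total}}$ an \emph{upper bound} on the true expectation; the statement as written treats it as an equality under the idealized model where $q$ is the exact probability and the directions are collected uniformly, which is the honest reading of "assuming i.i.d.\ submatrices $\mA$ following the distribution outlined in \cref{lemma:success_prob}." The remaining steps — the geometric expectations and the harmonic-number asymptotics — are entirely routine.
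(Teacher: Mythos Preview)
Your proposal is correct and follows essentially the same approach as the paper: both decompose the problem into (i) a geometric waiting time with parameter $q$ for each successful submatrix and (ii) the standard Coupon Collector count $bH_b$ for the number of successes needed, then combine the two and invoke the harmonic-number asymptotics. If anything, you are more careful than the paper, which simply asserts uniformity of the recovered direction and closes with ``linearity of expectation'' where your Wald-type justification is the cleaner argument; your remark that using the lower bound $q$ really yields an upper bound on $n^*_{\text{total}}$ is also a point the paper glosses over.
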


We validate this result experimentally in \cref{fig:effect_bs} where we observe excellent agreement for wide networks ($m \gg b$) and obtain, e.g., $n^*_{\text{total}} \approx 1.8 \times 10^5$ for a batch size of $b=16$.

\paragraph{Failure Probability}
We now analyze the probability of \tool failing despite considering all possible submatrices of $\mL$ and obtain:

\begin{restatable}{lemma}{failiureprobability}\label{lemma:failure_prob}
	Under the same assumptions as in \cref{lemma:success_prob}, we have an upper bound on the failure probability $p_{\text{fail}}^\text{ub}$ of \cref{alg:gradient_dismantle} even when sampling exhaustively as:
	\begin{equation*}
		p_{\text{fail}}^\text{ub} \leq b \left(1 - \sum_{k=b-1}^{m} \binom{m}{k} \frac{1}{2^m} \left(1 - 0.939^{(b-1)\binom{k}{b-1}}\right)\right) + 1-(1-p_{fr})^b,
	\end{equation*}
	where $p_{fr}$ is the probability of falsely rejecting a correct direction $\overline{\pmb{q}}'$ via our sparsity filter (\cref{sec:filtering}).
\end{restatable}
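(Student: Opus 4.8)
The plan is a union bound over the $b$ columns of $\mQ$ together with the event that the sparsity filter discards a correct direction. In the ``exhaustive'' regime the algorithm inspects every one of the $\binom{m}{b-1}$ candidate submatrices $\mL_\mA$, so the only remaining randomness is in $\tfrac{\partial \mathcal{L}}{\partial \mZ}$ itself, and \tool fails only if either (i) for some $i\in\{1,\dots,b\}$ there is no $(b-1)$-row submatrix of $\tfrac{\partial \mathcal{L}}{\partial \mZ}$ that is full rank with an all-zero $i^\text{th}$ column -- equivalently, no sampled $\mL_\mA$ has $\ker(\mL_\mA)=\Span(\vq_i)$, so the direction $\overline{\vq}_i$ is never produced -- or (ii) some correctly produced $\overline{\vq}_i$ is removed by the sparsity filter of \cref{sec:filtering}. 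Since recovering all $b$ unscaled directions is enough to rebuild $\mQ$ and hence $\mX$ (\cref{theorem:dirs}, \cref{theorem:scaling}, and the greedy selection of \cref{sec:filter_validate}), the union bound gives
\begin{equation*}
	p_{\text{fail}}^\text{ub} \le \sum_{i=1}^{b} \P(\text{no good submatrix for column } i) + \P(\text{some correct direction is filtered out}).
\end{equation*}

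For the second term, $\mL\overline{\vq}_i$ is a nonzero multiple of the $i^\text{th}$ column of $\tfrac{\partial \mathcal{L}}{\partial \mZ}$, so under the model of \cref{lemma:success_prob} its number of zero entries is $\mathrm{Binomial}(m,\tfrac12)$ and it is rejected with probability at most $p_{fr}=\tfrac{1}{2^m}\sum_{j=0}^{\lfloor m\tau\rfloor}\binom{m}{j}$; as the columns of $\tfrac{\partial \mathcal{L}}{\partial \mZ}$ are i.i.d., these events are independent across $i$, so $\P(\text{some correct direction is filtered out})\le 1-(1-p_{fr})^b$.

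For the per-column term, fix $i$, set $S_i=\{j:\tfrac{\partial \mathcal{L}}{\partial \mZ_{j,i}}=0\}$, and condition on $K_i:=|S_i|\sim\mathrm{Binomial}(m,\tfrac12)$, with $\P(K_i=k)=\binom{m}{k}2^{-m}$. No good submatrix can exist when $k<b-1$. When $k\ge b-1$, a good submatrix for $i$ exists as soon as one of the $\binom{k}{b-1}$ size-$(b-1)$ subsets $T\subseteq S_i$ makes the $(b-1)\times(b-1)$ submatrix on rows $T$ and columns $\{1,\dots,b\}\setminus\{i\}$ non-singular; each such matrix has i.i.d.\ $\zeta|\epsilon|$ entries and is therefore singular with probability at most $(\tfrac12+o_{b-1}(1))^{b-1}\le 0.939^{\,b-1}$ -- precisely the estimate established inside the proof of \cref{lemma:success_prob}. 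Combining this over the $\binom{k}{b-1}$ subsets gives $\P(\text{no good submatrix for } i\mid K_i=k)\le 0.939^{\,(b-1)\binom{k}{b-1}}$, hence
\begin{equation*}
\begin{aligned}
	\P(\text{no good submatrix for column } i)
	&\le \sum_{k=0}^{b-2}\binom{m}{k}\frac{1}{2^m}+\sum_{k=b-1}^{m}\binom{m}{k}\frac{1}{2^m}\,0.939^{\,(b-1)\binom{k}{b-1}}\\
	&=1-\sum_{k=b-1}^{m}\binom{m}{k}\frac{1}{2^m}\Bigl(1-0.939^{\,(b-1)\binom{k}{b-1}}\Bigr).
\end{aligned}
\end{equation*}
Plugging both bounds into the union bound yields the claim.

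The main obstacle is the step that multiplies the single-submatrix singularity bound over the $\binom{k}{b-1}$ row subsets of $S_i$: these subsets overlap, so the events ``the corresponding $(b-1)\times(b-1)$ submatrix is singular'' are dependent, and the product bound $0.939^{(b-1)\binom{k}{b-1}}$ does not follow immediately from the single-submatrix estimate. Making it rigorous requires a correlation argument -- a negative-association/FKG-type inequality for these singularity events, or a sequential row-revelation argument showing that conditioning on some submatrices being singular does not raise the probability that another one is -- which is the delicate part; the binomial bookkeeping for $K_i$, the independence of the filter events, and the reduction through \cref{theorem:dirs} are all routine.
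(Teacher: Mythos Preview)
Your approach is essentially identical to the paper's: union-bound over the $b$ columns, condition on the number $k$ of zero entries in column $i$ (which is $\mathrm{Binomial}(m,\tfrac12)$), bound the probability that none of the $\binom{k}{b-1}$ candidate submatrices is full rank by $0.939^{(b-1)\binom{k}{b-1}}$, and add the filter-rejection term $1-(1-p_{fr})^b$ via another union bound.

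The one substantive difference is that you explicitly flag the product step $0.939^{(b-1)\binom{k}{b-1}}$ as requiring a correlation argument, since the singularity events for overlapping row subsets are dependent. This is a fair observation, and the paper's proof does \emph{not} supply that argument either: it simply states the single-submatrix full-rank probability $1-0.939^{b-1}$ and then writes down the product bound without further justification, implicitly treating the $\binom{k}{b-1}$ singularity events as independent. (Indeed, in the text immediately following the lemma the paper presents the related expression $p_{\text{fail}}^{\text{approx}}$ as the result of \emph{assuming} independence between submatrices, which tacitly acknowledges that the rigorous status of the exponent $(b-1)\binom{k}{b-1}$ in $p_{\text{fail}}^{\text{ub}}$ is left open.) So you have matched the paper's argument and have been more careful than the paper in identifying where it is heuristic; no additional ideas are used in the paper's version.
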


If we assume the full-rankness of submatrices $\mA$ to i) occur with probability $1 - (\frac{1}{2} - o_{b-1}(1))^{b-1}$ for $o_{b-1}(1) \approx  0$ (true for large $b$ \citep{tikhomirov2020singularity}) and ii) be independent between submatrices, we instead obtain:
\begin{equation*}
	p_{\text{fail}}^\text{approx} \approx 1 - \left(\sum_{k=b-1}^{m} \binom{m}{k} \frac{1}{2^m} \left(1 - 0.5^{(b-1)\binom{k}{b-1}}\right)\right)^b + 1-(1-p_{fr})^b.
\end{equation*}
We illustrate this bound in \cref{fig:failure_prob} and empirically validate this bound in \cref{fig:failure_prob_val} and observe the true failure probability to lie between $p_{\text{fail}}^\text{approx}$ and $p_{\text{fail}}^\text{ub}$.

	\section{Empirical Evaluation}\label{sec:eval}
\begin{wrapfigure}{R}{0.37\textwidth}
	\begin{minipage}{0.37\textwidth}
		\vspace{-2em}
		\begin{table}[H]
			\centering
			\vspace{-3.5mm}
			\caption{Comparison to prior work in the image domain.}
			\label{tab:comparison}
			\centering
			\vspace{1mm}
			\resizebox*{\linewidth}{!}{
				\begin{tabular}{@{}
						lrr@{}}
					\toprule
					Method & PSNR $\uparrow$ & Time/Batch \\
					\midrule
					CI-Net~\citep{zhang2023generative} Sigmoid & $38.0$  & 1.6 hrs\\
					CI-Net~\citep{zhang2023generative} ReLU & $15.6$ & 1.6 hrs \\
					\citet{geiping} & $19.6$  & 18.0 min \\
					\tool(Ours) & \textbf{124.2} & \textbf{2.0 min}\\
					\bottomrule		
				\end{tabular}
			}
		\end{table}
		\vspace{-2.4em}
	\end{minipage}%
\end{wrapfigure}
In this section, we empirically evaluate the effectiveness of \tool on \mnist~\citep{lecun2010mnist}, \cifar~\citep{krizhevsky2009learning}, \TIN~\citep{Le2015TinyIV}, and \ImgNet~\citep{imagenet} across a wide range of settings. In addition to the reconstruction quality metrics PSNR and LPIPS, commonly used to evaluate gradient inversion attacks, we report accuracy as the portion of batches for which we recovered the batch up to numerical errors and the number of sampled submatrices (number of iterations).

\paragraph{Experimental Setup}
For all experiments, we use our highly parallelized PyTorch~\citep{PaszkeGMLBCKLGA19} GPU implementation of \tool.   
Unless stated otherwise, we run all experiments on \cifar batches of size $b=20$ using a $6$ layer ReLU-activated FCNN with width $m=200$ and set $\tau$ to achieve a false rejection rate of $p_{fr}\leq 10^{-5}$. We supply ground truth labels to all methods \emph{except} \tool{}.

\subsection{Comparison to Prior Work}
In \cref{tab:comparison}, we compare \tool against prior gradient inversion attacks from the image domain on the \ImgNet dataset rescaled to $256\times256$ resolution. In particular, we compare to \citet{geiping}\footnote{We use so-called "modern" version of the attack from \url{https://github.com/JonasGeiping/breaching}}, as well as, the recent CI-Net~\citep{zhang2023generative}. As CI-Net only considers networks with the less common Sigmoid activations, we report its performance on both ReLU and Sigmoid versions of our network.

\begin{wrapfigure}{R}{0.55\textwidth}
	\begin{minipage}{0.55\textwidth}
		\vspace{-2em}
		\begin{table}[H]
			\centering
			\vspace{-0mm}
			\caption{Results vs prior work in the tabular domain.}
			\label{tab:tabular}
			\centering
			\vspace{1mm}
			\resizebox*{\linewidth}{!}{
				\begin{tabular}{@{}
						lrrr@{}}
					\toprule
					Method & Discr Acc (\%) $\uparrow$ & Cont. MAE $\downarrow$ & Time/Batch \\
					\midrule
					Tableak~\citep{tableak} & $97$ & 4922.7 & 2.6 min \\
					\tool(Ours) & \textbf{100}  & \textbf{20.4} & \textbf{0.4 min} \\
					\bottomrule		
				\end{tabular}
			}
		\end{table}
		\vspace{-2.2em}
	\end{minipage}%
\end{wrapfigure}
We observe that while CI-Net obtains very good reconstructions with the Sigmoid network (PSNR of 38), \tool still achieves a much higher PSNR (124) as it is exact. Further, for the more common ReLU activations, the performance of CI-Net drops significantly to a PSNR $<16$ compared to $19.6$ for \citet{geiping}. Additionally, \tool is much faster compared to both \citet{geiping} and CI-Net, taking $~10\times$ and $~100\times$ less time, respectively. Finally, we want to emphasize that both prior works rely on strong prior knowledge, including label information and knowledge of the structure of images, whereas we assume \emph{no information at all} about the data distribution and still achieve much better results in only a fraction of the time taken.

To confirm the versatility of \tool, we compare it to the SoTA attack in the tabular domain, Tableak~\citep{tableak}, in \cref{tab:tabular}. We see that due to the exact nature of our attack, we recover both continuos and discrete features better on the ADULT dataset~\citep{adult} with $b=16$, while still being $6\times$ faster.

\subsection{Main Results}\label{sec:main_result}
\begin{wrapfigure}{R}{0.64\textwidth}
	\begin{minipage}{0.64\textwidth}
		\vspace{-4.7em}
		\begin{table}[H]
			\centering
			\vspace{-4mm}
			\caption{Reconstruction quality across 100 batches.}
			\label{tab:main_results}
			\centering
			\vspace{1mm}
			\resizebox*{\linewidth}{!}{
				\begin{tabular}{@{}
						lrrrr@{}}
					\toprule
					Dataset & PSNR $\uparrow$ & LPIPS $\downarrow$ & Acc  (\%) $\uparrow$ & Time/Batch \\
					\midrule
					\mnist & $\;\,99.1$  & NaN & \textbf{99} & 2.6 min\\
					\cifar & $106.6$  & $\;\,1.16 \!\times\! 10^{-5}$ & \textbf{99} & 1.7 min \\
					\TIN   & $110.7$  & $1.62 \!\times\! 10^{-4}$ & \textbf{99} & \textbf{1.4 min} \\
					\ImgNet $224\times 224$ & $125.4$ & $1.05 \!\times\! 10^{-5}$ & \textbf{99} & 2.1 min\\
					\ImgNet $720\times 720$ & \textbf{125.6} & $\mathbf{8.08 \!\times\! 10^{-11}}$ & \textbf{99} & 2.6 min\\
					\bottomrule
				\end{tabular}
			}
		\end{table}
		\vspace{-2.4em}
	\end{minipage}%
\end{wrapfigure}
We evaluate \tool on  \mnist, \cifar, \TIN and \ImgNet at two different resolutions, reporting results in \cref{tab:main_results}. Across datasets, \tool can reconstruct almost all batches perfectly, achieving PSNRs of $100$ and above even at a batch size of $b=20$ for images as large as $720\times720$ in $<3$ minutes. We provide additional results on heterogeneous data and trained networks in \cref{app:experiments}, as well as, on the FedAvg protocol in \cref{app:fedavg}.

\begin{wrapfigure}{R}{0.46\textwidth}
	\begin{minipage}{0.45\textwidth}
		\vspace{-9mm}
		\begin{figure}[H]
			\centering
			\includegraphics[width=\linewidth]{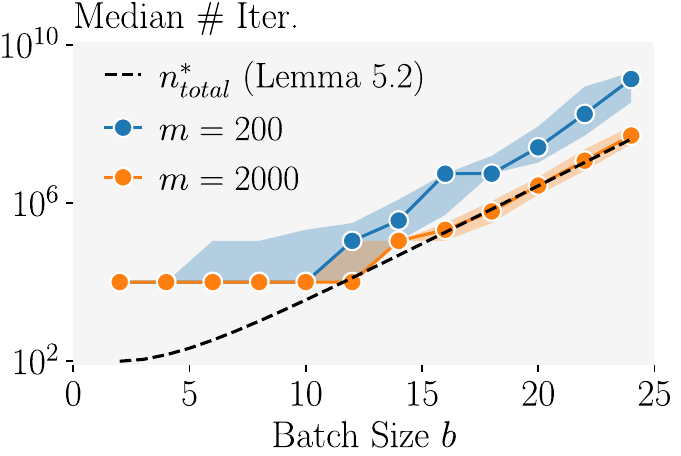}
			\vspace{-1.5em}
			\caption{Effect of batch size $b$ on the number of required submatrices. Expectation from \cref*{lemma:expected_samples} dashed and median (10\th to 90\th percentile shaded) depending on network width $m$ solid. We always evaluate $10^4$ submatrices in parallel, explaining the plateau.}
			\label{fig:effect_bs}
		\end{figure}
		\vspace{-2.5em}
	\end{minipage}%
\end{wrapfigure}

\paragraph{Effect of Batch Size $b$}

We evaluate the effect of batch size $b$ on accuracy and the required number of iterations $n^*_{\text{total}}$ for a wide ($m=2000$) and narrow ($m=200$) network. While $n^*_{\text{total}}$ increases exponentially with $b$, for both networks, the narrower network requires about $20$ times more iterations than the wider network (see \cref{fig:effect_bs}). While trends for the wider network ($m \gg b $) are perfectly described by our theoretical results in \cref{sec:sampling_analysis}, some independence assumptions are violated for the narrower network, explaining the larger number of required iterations. 
While we can recover all batches perfectly for the wider network, we see a sharp drop in accuracy from $99\%$ at $b=20$ to $63\%$ at $b=24$ (see \cref{fig:effect_2stage}) for the narrower network. This is due to increasingly more batches requiring more than the $N=2\times 10^9$ submatrices we sample at most.
\paragraph{Effect of Network Architecture}
\begin{wrapfigure}{R}{0.59\textwidth}
	\centering
	\vspace{-7mm}
	\begin{minipage}[b]{0.59\textwidth}
		\begin{subfigure}{.49\linewidth}
			\centering
			\includegraphics[width=1.05\linewidth]{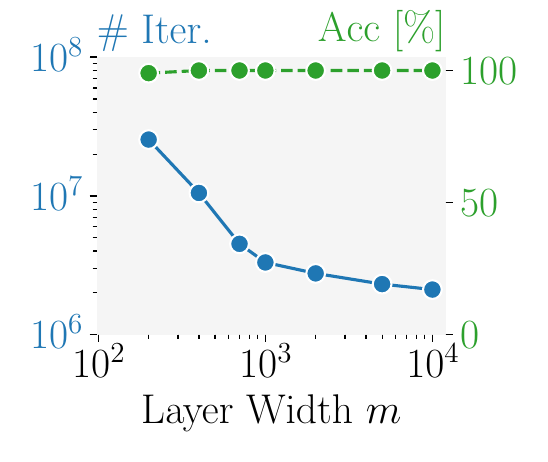}
			\vspace{-6mm}
		\end{subfigure}
		\hfil
		\begin{subfigure}{.49\linewidth}
			\centering
			\includegraphics[width=1.05\linewidth]{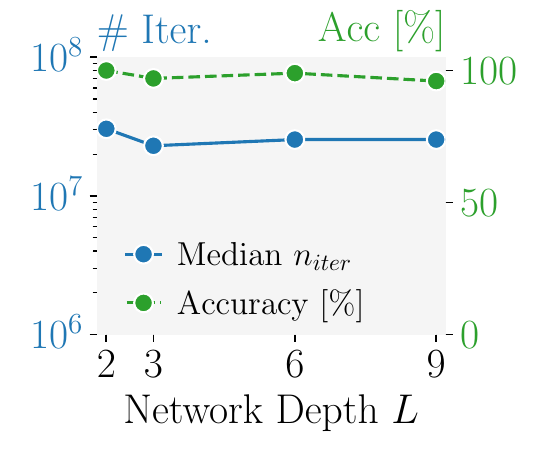}
			\vspace{-6mm}
		\end{subfigure}
		\vspace{-1mm}
		\caption{Accuracy (green) and number of median iterations (blue) for different network widths $m$ at $L=6$ (left) and depths $L$ at $m=200$ (right).}
		\label{fig:eval_architecture}
	\end{minipage}
	\vspace{-12mm}
\end{wrapfigure}
We visualize the performance of \tool across different network widths and depths in \cref*{fig:eval_architecture}. We observe that while accuracy is independent of both (given sufficient width $m \gg b$), the number of required iterations reduces with increasing width $m$. We provide further ablations on the effect of our two-stage filtering in \cref{app:twostage} and DPSGD noise in \cref{app:dp}.

\paragraph{Effect of Layer Depth}
\begin{wrapfigure}{R}{0.38\textwidth}
	\begin{minipage}{0.38\textwidth}
		\vspace{-1.0em}
		\begin{table}[H]
			\centering
			\vspace{-7.5mm}
			\caption{Effect of the attacked layer's depth $l$ ($1\leq l\leq 6$) on reconstruction time and quality for 100 \textsc{TinyImageNet} batches of size $b=20$.}
			\label{tab:layers}
			\centering
			\vspace{1mm}
			\resizebox*{\linewidth}{!}{
				\begin{tabular}{@{}l crrrr@{}} \toprule
					$l$ & MAE $\downarrow$ & Acc(\%) $\uparrow$ & Time/Batch\\ 
					\midrule
					1 & $\mathbf{1.06 \!\times\! 10^{-6}}$ & \textbf{100} & 2.3 min \\
					2 & $1.33 \!\times\! 10^{-6}$ & \textbf{100} & \textbf{2.2 min} \\
					3 & $1.67 \!\times\! 10^{-6}$ & \textbf{100} & 5.6 min \\
					4 & $2.80 \!\times\! 10^{-6}$ & $99$ & 19 min \\	
					5 & $3.04 \!\times\! 10^{-6}$ & $83$ & 70 min \\
					\bottomrule
				\end{tabular}
			}
		\vspace{-4mm}
		\end{table}
	\end{minipage}%
\end{wrapfigure}

Our experiments so far focused on recovering inputs to the first layer of FCNNs. However, \tool's capabilities extend beyond this, as highlighted in \cref{sec:sparsity}. To demonstrate this, we use \tool to reconstruct the inputs to all FC layers followed by a ReLU activation in a 6-layer FCNN with a width of $m=400$ at initialization. 

The results, presented in \cref{tab:layers}, show that \tool successfully recovers the inputs to all layers almost perfectly. However, attacking later layers is more computationally expensive. Specifically, the runtime for $l=5$ increases to 70 minutes/batch resulting in $17$ batches that timed-out. This increased computational cost is due to the initialization of the network, which causes the outputs of later layers to be dominated by their bias terms with their inputs being almost irrelevant. This issue is mitigated after a few training steps, as weights and biases adjust to better reflect the relationships between inputs and outputs. We find that after $5000$ gradient steps the time per batch reduces to $< 1$ min at an accuracy of $>95\%$ for layer $l=5$. 

\subsection{Scaling \tool via Optimization-based Attacks}\label{sec:comb}
\begin{wrapfigure}{R}{0.51\textwidth}
	\begin{minipage}{0.51\textwidth}
		\vspace{-1.0em}
		\begin{table}[H]
			\centering
			\vspace{-8mm}
			\caption{Comparison between the reconstruction quality of \citet{geiping} and a version of \tool{} that uses \citet{geiping} to speed up its search procedure evaluated on 10 \textsc{TinyImageNet} batches.}
			\label{table:comb}
			\centering
			\vspace{1mm}
			\resizebox*{\linewidth}{!}{
				\begin{tabular}{@{}l rrrrr@{}} \toprule
					Method & $b$ & $m$ &  Acc(\%) $\uparrow$ & PSNR $\uparrow$ \\
					\midrule
					\citet{geiping} & 50 & 400 & - & $26.5$ \\
					\tool+ \citet{geiping}     & 50 & 400 & 100 & \textbf{124.5} \\
					\midrule
					\citet{geiping} & 100 & 2000 & - & $32.8$ \\
					\tool+ \citet{geiping} & 100 & 2000 & $60$ & \textbf{81.5} \\ %
					\bottomrule
				\end{tabular}
			}
		\end{table}
		\vspace{-3em}
	\end{minipage}%
\end{wrapfigure}
As we prove theoretically in \cref{sec:sampling_analysis} and verify practically in \cref{app:failure_prob}, in the common regime where the batch size $b$ is much smaller than dimensions of the attacked linear layer w.h.p. the input information is losslessly represented in the client gradient. However, in practice for $b>25$ the exponential sampling complexity of \tool{} becomes a bottleneck that prevents the recovery of the input (see \cref{fig:effect_bs}). 

In this section, we propose a method for alleviating the exponential sampling complexity by combining \tool{} with an approximate reconstruction method to get a prior on which submatrices $\mL_A$ satisfy the conditions of \cref{theorem:dirs}, i.e., have corresponding matrices $\mA$ containing a 0-column. To this end, we first obtain an estimate of the client pre-activation values $\widetilde{\mZ}$ based on the approximate input reconstructions from \citet{geiping}. As large negative pre-activation values in $\widetilde{\mZ}$ are much more likely to correspond to negative pre-activation values in the true $\mZ$, and, thus, to $0$s in gradients $\tfrac{\partial \mathcal{L}}{\partial Z}$, we record the locations of the $3b$ largest negative values for each column of $\widetilde{\mZ}$. Importantly, by choosing the locations this way, we ensure that each group of $3b$ locations correspond to locations of likely 0s in \emph{same column} of $\tfrac{\partial \mathcal{L}}{\partial Z}$. Restricting the sampling of the row indices of $\mL_A$ and $\mA$ only within each group of locations, ensures that $\mL_A$ is very likely to satisfying the conditions of \cref{theorem:dirs}.

We confirm the effectiveness of this approach in a preliminary study, shown in \cref{table:comb}, that demonstrates the combined approach allows a substantial increase in the batch size \tool{} can scale to (up to 100), thus effectively eliminating its exponential complexity. The results show that the combined approach drastically improves the reconstruction quality of \citet{geiping} as well, as unlike \citet{geiping}, it achieves exact reconstruction. Importantly, we observe that even for the $4$ batches \tool{} failed to recover in \cref{table:comb}, \tool{} still reconstructs $>97$ of the $100$ directions $\overline{\vq_i}$ correctly, suggesting that future work can further improve upon our results.

\subsection{Feature Inversion in Convolutional Neural Networks}
\begin{wrapfigure}{R}{0.5\textwidth}
	\begin{minipage}{0.5\textwidth}
		\vspace{-1.0em}
		\begin{table}[H]
			\centering
			\vspace{-8.3mm}
			\caption{Comparison between the reconstructions on \textsc{VGG16} for \citet{geiping}, CPA~\citep{cocktail}, and \tool{} for 10 \textsc{ImageNet} batches ($b=16$).}
			\label{table:cocktail}
			\centering
			\vspace{0mm}
			\resizebox*{\linewidth}{!}{
				\begin{tabular}{@{}l rrrrr@{}} \toprule		
					Method & LPIPS $\downarrow$ & Feature Sim $\uparrow$ \\
					\midrule
					\citet{geiping} & $0.562$ & - \\
					CPA\citep{cocktail} + FI + \citet{geiping} & $0.388$ & $0.939$ \\
					\tool+ FI + \citet{geiping}& \textbf{0.362} & \textbf{0.984} \\ 
					\bottomrule
				\end{tabular}
			}
		\end{table}
		\vspace{-3em}
	\end{minipage}%
\end{wrapfigure}
Following the Cocktail Party Attack (CPA)~\citep{cocktail}, we experiment with using \tool{} to recover the input features to the first linear layer of a pre-trained \textsc{VGG16} convolutional network with size $25088\times4096$ for \textsc{ImageNet} batches of $b=16$ and use them in a feature inversion (FI) attack to approximately recover the client images. We show the results of our experiments, based on the CPA's code and parameters, in \cref{table:cocktail}. We see the inverted features drastically improve quality of the final reconstructions, and that \tool{} achieves almost perfect feature cosine similarity, resulting in better overall reconstruction versus CPA.

	\vspace{-1mm}
\section{Related Work}
\vspace{-1mm}
In this section, we discuss how we relate to prior work.

\vspace{-1mm}
\paragraph{Gradient Inversion Attacks}
Since gradient inversion attacks have been introduced \citep{dlg}, two settings have emerged: 
In the \emph{malicious setting}, the server does not adhere to the training protocol and can adversarially engineer network weights that maximize leaked information \citep{cah,rtf,decepticons,fishing}. 
In the strictly harder \emph{\hbc setting}, the server follows the training protocol but still aims to reconstruct client data. 
We target the \hbc setting, where prior work has either recovered the input exactly for batch sizes of $b=1$ \citep{analyticPhong,rgap}, or approximately for $b > 1$ \cite{geiping,nvidia,aaai,tableak,cocktail}. In this setting, we are the \emph{first to reconstruct inputs exactly for batch sizes $b > 1$}.

Most closely related to our work is \citet{cocktail} which leverage the low-rank structure of the gradients to frame gradient inversion as a blind source separation problem, improving their approximate reconstructions. In contrast, we derive an \emph{explicit} low-rank representation and additionally leverage gradient sparsity reconstruct inputs exactly.

Unlike a long line of prior work, we rely neither on any priors on the data distribution \citep{tableak,lamp,gupta2022recovering,ggl} nor on a reconstructed classification label \cite{geiping,idlg,aaai,nvidia,tableak}. This allows our approach to be employed in a much wider range of settings where neither is available.

\vspace{-1mm}
\paragraph{Defenses Against Gradient Inversion}
Defenses based on Differential Privacy \citep{dpsgd} add noise to the computed gradients on the client side, providing provable privacy guarantees at the cost of significantly reduced utility. 
Another line of work increases the empirical difficulty of inversion by increasing the effective batch size, by securely aggregating gradients from multiple clients \citep{secagg} or doing multiple gradient update steps locally before sharing an aggregated weight update \citep{fedavg}.
Finally, different heuristic defenses such as gradient pruning \citep{dlg} have been proposed, although their effectiveness has been questioned \citep{bayesian}.

\paragraph{Sparsely-used Dictionary learning}
Recovering the \nameQ $\mQ$ is related to the well-studied problem of sparsely-used dictionary learning. However, there the aim is to find the sparsest coefficient matrix (corresponding to our $\frac{\partial \mathcal{L}}{\partial \mZ}$) and dense dictionary ($\mQ^{-1}$) approximately encoding a signal ($\mL$). In contrast, we do not search for the sparsest solution yielding an approximate reconstruction but a solution that exactly induces consistent $\mX$ and $\frac{\partial \mathcal{L}}{\partial \mZ}$, which happens to be sparse. Sparsely-used dictionary learning is known to be NP-hard \citep{tillmann2014computational} and typically solved approximately \citep{ksvd,spielman2012exact,sun2016complete}. However, under sufficient sparsity, it can be solved exactly in polynomial time \citep{spielman2012exact}. While our $\frac{\partial \mathcal{L}}{\partial \mZ}$ are not sparse enough, we still draw inspiration from \citet{spielman2012exact} in \cref{sec:filter_validate}.

	\vspace{-1mm}
\section{Limitations}
\vspace{-1mm}
\label{sec:limitations}

We focus on recovering the inputs to fully connected layers with ReLU activations such as they occur at the beginning of fully connected networks or as aggregation layers of many other architectures. Extending our approach to other layers is an interesting direction for future work.

Further, our approach scales exponentially with batch size $b$. While \tool{}'s massive parallelizability and its ability to be combined with optimization-based attacks, as shown in \cref{sec:comb}, can partially mitigate the computational complexity, future research is still required to make reconstruction of batches of size $b>100$ practical. 
	\section{Conclusion}
\vspace{-1mm}
We propose \tool, the first algorithm permitting batches of $b>1$ elements to be recovered exactly in the \hbc setting. We demonstrate theoretically and empirically that \tool succeeds with high probability and  that our highly parallelized GPU implementation is effective across a wide range of settings, including batches of up to $25$ elements and large networks and inputs. 

We thereby demonstrate that contrary to prior belief, an exact reconstruction of batches is possible in the \hbc setting, suggesting that federated learning on ReLU networks might be inherently more susceptible than previously thought. To still protect client privacy, large effective batch sizes, obtained, e.g., via secure aggregation across a large number of clients, might prove instrumental by making reconstruction computationally intractable. 
	\newpage
	\subsubsection*{Acknowledgments}
This research was partially funded by the Ministry of Education and Science of Bulgaria (support for INSAIT, part of the Bulgarian National Roadmap for Research Infrastructure).

This work has been done as part of the EU grant ELSA (European Lighthouse on Secure and Safe AI, grant agreement no. 101070617) . Views and opinions expressed are however those of the authors only and do not necessarily reflect those of the European Union or European Commission. Neither the European Union nor the European Commission can be held responsible for them.

The work has received funding from the Swiss State Secretariat for Education, Research and Innovation (SERI).

	\message{^^JLASTBODYPAGE \thepage^^J}
	\bibliographystyle{unsrtnat}
	\bibliography{references}
	
	\message{^^JLASTREFERENCESPAGE \thepage^^J}

	\ifincludeappendixx
	\clearpage
	\appendix
	\section{Broader Impact} \label{sec:impact}

In this work, we demonstrate that contrary to prior belief, an exact reconstruction of batches is possible in the \hbc setting for federated learning. 
As our work demonstrates the susceptibility of federated learning systems using ReLU networks, this work inevitably advances the capabilities of an adversary. 
Nonetheless, we believe this to be an important step in accurately assessing the risks and utilities of federated learning systems. 

To still protect client privacy, large effective batch sizes, obtained, e.g., via secure aggregation across a large number of clients, might prove instrumental by making reconstruction computationally intractable. 
As gradient information and network states can be stored practically indefinitely, our work highlights the importance of proactively protecting client privacy in federated learning not only against current but future attacks.
This underlines the importance of related work on provable privacy guarantees obtained via differential privacy.
\section{Deferred Proofs} \label{app:proofs}

\dLdWdLdZXT*

\begin{proof}
	We will use Einstein notation for this proof:
	\begin{align*}
		\frac{\partial \mathcal{L}}{\partial \mW\indices{_i^j}} 
		&= 
		\frac{\partial \mathcal{L}}{\partial \mZ\indices{_k^l}}  
		\frac{\partial \mZ\indices{_k^l}}{\partial \mW\indices{_i^j}}
		\\
		&= 
		\frac{\partial \mathcal{L}}{\partial \mZ\indices{_k^l}}  
		\frac{\partial (\mW\indices{_k^m} \mX\indices{_m^l} + b\indices{_k} \delta\indices{^l})}{\partial \mW\indices{_i^j}} 
		\\
		&= 
		\frac{\partial \mathcal{L}}{\partial \mZ\indices{_k^l}}  
		\frac{\partial \mW\indices{_k^m} \mX\indices{_m^l}}{\partial \mW\indices{_i^j}} 
		\\
		&= 
		\frac{\partial \mathcal{L}}{\partial \mZ\indices{_k^l}}  
		\frac{\partial \mW\indices{_k^m}}{\partial \mW\indices{_i^j}} \mX\indices{_m^l}
		\\
		&= 
		\frac{\partial \mathcal{L}}{\partial \mZ\indices{_k^l}}  
		\delta\indices{_k^i} \delta\indices{_j^m} \mX\indices{_m^l}
		\\
		&= 
		\frac{\partial \mathcal{L}}{\partial \mZ\indices{_i^l}}  
		\mX\indices{_j^l}
		\\
		&= 
		\frac{\partial \mathcal{L}}{\partial \mZ\indices{_i^l}}  
		(\mX^T)\indices{^l_j}. 
	\end{align*}
	We note that $\delta\indices{_k^i}$ is the Kronecker delta, that is $\delta\indices{_k^i} = 1$ if $k = i$ and 0 otherwise. Further, $\delta\indices{^l} = 1$ for all $l$.
	Hence we arrive at \cref{eq:decomp}.
\end{proof}

\begin{restatable}{lemma}{decomp} \label{lemma:decomp}
	Let $b, n, m \in \mathbb{N}$ such that $b < n, m$. Further, let $\mA, \mL \in \mathbb{R}^{m \times b}$ and $\mB, \mR \in \mathbb{R}^{b \times n}$ be matrices of maximal rank, satisfying $\mA \mB = \mL \mR$. Then there exists a unique \nameQ $\mQ \in \text{GL}_b(\R)$ s.t. $\mA = \mL  \mQ$, and $\mB = \mQ^{-1}  \mR$. 
\end{restatable}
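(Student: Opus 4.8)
The plan is to construct $\mQ$ explicitly from one-sided inverses and then verify the two stated identities, isolating the invertibility of $\mQ$ as the single point that needs a genuine argument.

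First I would record that, since $\mL \in \R^{m \times b}$ has maximal rank $b$ (full column rank, as $b < m$), it admits a left inverse $\mL^{-L} \in \R^{b \times m}$ with $\mL^{-L} \mL = \mI_b$; symmetrically, since $\mR \in \R^{b \times n}$ has maximal rank $b$ (full row rank, as $b < n$), it admits a right inverse $\mR^{-R} \in \R^{n \times b}$ with $\mR \mR^{-R} = \mI_b$. I then set $\mQ := \mL^{-L} \mA$ and $\mP := \mB \mR^{-R}$, both lying in $\R^{b \times b}$.

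The crux is to show $\mQ \in \text{GL}_b(\R)$. Substituting the hypothesis $\mA \mB = \mL \mR$ gives
\[
	\mQ \mP = \mL^{-L}(\mA \mB)\mR^{-R} = \mL^{-L}(\mL \mR)\mR^{-R} = (\mL^{-L}\mL)(\mR \mR^{-R}) = \mI_b ,
\]
and since $\mQ$ and $\mP$ are square this forces $\mP = \mQ^{-1}$, so $\mQ$ is invertible. I expect this to be the main obstacle: the remainder is essentially bookkeeping, but one must check that the product constraint pins the transition matrix down inside $\text{GL}_b(\R)$ rather than merely in $\R^{b\times b}$, and the displayed computation is precisely where the full rank of \emph{both} $\mL$ and $\mR$ is used.

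To obtain the two identities I would right-multiply $\mA \mB = \mL \mR$ by $\mR^{-R}$ to get $\mA \mP = \mL$, i.e.\ $\mL = \mA \mQ^{-1}$, hence $\mA = \mL \mQ$; and left-multiply $\mA \mB = \mL \mR$ by $\mL^{-L}$ to get $\mQ \mB = \mR$, hence $\mB = \mQ^{-1} \mR$. For uniqueness, if some $\mQ' \in \text{GL}_b(\R)$ also satisfies $\mA = \mL \mQ'$, then $\mL \mQ = \mL \mQ'$, and applying $\mL^{-L}$ on the left yields $\mQ = \mQ'$ (the second identity $\mB = \mQ'^{-1}\mR$ is then automatic since it is forced by $\mQ'\mB = \mR$ together with invertibility of $\mQ'$). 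This completes the argument.
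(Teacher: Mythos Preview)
Your proof is correct and follows essentially the same approach as the paper: build $\mQ$ explicitly from one-sided inverses, establish invertibility by exhibiting its inverse via the identity $\mA\mB=\mL\mR$, verify the two factorization identities, and deduce uniqueness from a left inverse of $\mL$. The only cosmetic difference is that the paper takes one-sided inverses of $\mA$ and $\mB$ (setting $\mQ=\mR\mB^{-R}$ with inverse $\mA^{-L}\mL$), whereas you take one-sided inverses of $\mL$ and $\mR$ (setting $\mQ=\mL^{-L}\mA$ with inverse $\mB\mR^{-R}$); by uniqueness these coincide, and the arguments are structurally identical.
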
 

\begin{proof}
	As $b \leq n, m$ and the matrices $\mA \in \mathbb{R}^{m \times b}$ and $\mB \in \mathbb{R}^{b \times n}$ have full rank, we know that there exists
	\begin{itemize}
		\item a left inverse $\mA^{-L} \in \mathbb{R}^{b \times m}$ for $\mA$: $\mA^{-L} \mA = \mI_b$ and 
		\item a right inverse $\mB^{-R} \in \mathbb{R}^{n \times b}$ for $\mB$: $\mB \mB^{-R} = \mI_b$.
	\end{itemize}
	Thus, it follows from
	\begin{equation*}
		\mA^{-L} \mL \mR \mB^{-R} 
		=
		\mA^{-L} \mA \mB \mB^{-R} 
		= \mI_b,
	\end{equation*} 
	that $(\mA^{-L} \mL)^{-1} = \mR \mB^{-R}$.
	We now set $\mQ = \mR \mB^{-R}$. 

	This $\mQ$ satisfies the required properties: 
	\begin{itemize}
		\setlength\itemsep{-1.0em}
		\item $\mB = \mQ^{-1} \mR$:
		\begin{equation*}
			\mQ^{-1} \mR = \mA^{-L} \mL \mR = \mA^{-L} \mA \mB = \mB,
		\end{equation*}
		\item $\mA = \mL \mQ$:
		\begin{equation*}
			\mL \mQ = \mL \mR \mB^{-R} = \mA \mB \mB^{-R} = \mA,
		\end{equation*}
		\item Uniqueness: Assume we have $\mQ_1$ and $\mQ_2$ that satisfy $\mL \mQ_1 = \mL \mQ_2 = \mA$. As $\mL$ is of rank $b$ and $b \leq m$, there exists a left inverse $\mL^{-L}$ for $\mL$: $\mL^{-L} \mL = \mI_b$. Applying this left inverse to $\mL \mQ_1 = \mL \mQ_2$, directly yields $\mQ_1 = \mQ_2$, and hence we get uniqueness. 
	\end{itemize}
\end{proof}

\bias*

\begin{proof}
	We use again Einstein notation. 
	\begin{align*}
		\frac{\partial \mathcal{L}}{\partial \vb\indices{_i}}
		&=
		\frac{\partial \mathcal{L}}{\partial \mZ\indices{_k^l}}
			\frac{\partial \mZ\indices{_k^l}}{\partial \vb\indices{_i}}
		\\
		&=
		\frac{\partial \mathcal{L}}{\partial \mZ\indices{_k^l}}
		\frac{\partial (\mW\indices{_k^m} \mX\indices{_m^l} + \vb\indices{_k} \delta^l)}{\partial \vb\indices{_i}}
		\\
		&=
		\frac{\partial \mathcal{L}}{\partial \mZ\indices{_k^l}}
		\frac{\partial \vb\indices{_k} \delta^l}{\partial \vb\indices{_i}}
		\\
		&=
		\frac{\partial \mathcal{L}}{\partial \mZ\indices{_k^l}}
		\delta\indices{_k^i} \delta^l
		\\
		&=
		\frac{\partial \mathcal{L}}{\partial \mZ\indices{_i^l}}
		\delta^l.
	\end{align*}
	This concludes the proof. 
\end{proof}

\scaling*

\begin{proof}
	The proof is straight forward. Using \cref{lemma:bias} and \cref{theorem:q}, we know that 
	\begin{align*}
		\overline{\mQ}\,^{-1} \mL^{-L} \frac{\partial \mathcal{L}}{\partial \vb} 
		&= 
		\overline{\mQ}\,^{-1} \mL^{-L} \frac{\partial \mathcal{L}}{\partial \mZ} 
			\begin{bsmallmatrix} 
				1 \\ 
				\vphantom{\int\limits^x}\smash{\vdots} \\ 
				1 
			\end{bsmallmatrix} 
		\\
		&= 
		\overline{\mQ}\,^{-1} \mL^{-L} \mL  \mQ 
			\begin{bsmallmatrix} 
				1 \\ 
				\vphantom{\int\limits^x}\smash{\vdots} \\ 
				1 
			\end{bsmallmatrix} 
		\\
		&= 
		\overline{\mQ}\,^{-1}  \mQ 
			\begin{bsmallmatrix} 
				1 \\ 
				\vphantom{\int\limits^x}\smash{\vdots} \\ 
				1 
			\end{bsmallmatrix} 
		\\
		&= 
		\overline{\mQ}\,^{-1}  \overline{\mQ} \diag(s_1, \dots, s_b)  
			\begin{bsmallmatrix} 
				1 \\ 
				\vphantom{\int\limits^x}\smash{\vdots} \\ 
				1 
			\end{bsmallmatrix} 
		\\
		&= 
		\begin{bsmallmatrix} 
			s_1 \\ 
			\vphantom{\int\limits^x}\smash{\vdots} \\ 
			s_b 
		\end{bsmallmatrix}.
	\end{align*}
\end{proof}

\successprob*

\begin{proof}
	We have the probability of one of the $b$ columns being all zero as $\frac{b}{2^{b-1}}$ if the network has full rank, all other columns will not be all-zero.
	
	Further, we have the probability of the submatrix $\mathds{1}_{\mA_>0}$ being full rank conditioned on column $i$ being all-zero as the probability of the matrix described by remaining $b-1$ columns being non-singular. This probability is $1 - (\tfrac{1}{2} + o_{b-1}(1))^{b-1}$ \citep{tikhomirov2020singularity} where $\lim_{b\rightarrow\infty}o_{b-1}(1) = 0$, which can be lower-bounded with $1-0.939^{b-1}$ \citep{tao2005random}. We thus obtain their joint probability as their product.
\end{proof}

\expectedsamples*

\begin{proof}
	As we sample submatrices $\mA$ uniformly at random with replacement, assuming them to be i.i.d.~is well justified for the regime of $m \gg b$. 
	The the number $n$ of submatrices drawn between correct direction vectors $\vq_i$ thus follows a Geometric distribution $\P[n=k] = q (1-q)^{k-1}$ with success probability $q$ with expectation $n^* = \E[n] = \tfrac{1}{q}$. 
	As we draw correct direction vectors $\vq_i$ uniformly at random from the $b$ columns of $\overline{\mQ}$, we have the probability of drawing a new direction vector $\vq_i$ as $\frac{b-k}{b}$ for $k$ already drawn direction vectors. Again via the expectation of the Geometric distribution we obtain the expected number $c^*$ of correct direction vectors we have to draw until we have recovered all $b$ distinct ones as the solution of the Coupon Collector Problem $c^*=\sum_{k=0}^{b-1} \frac{b}{b-k} = b H_b \approx b\log(b) + \gamma b + \tfrac{1}{2}$. The proof concludes with the linearity of expectation.
\end{proof}

\paragraph{Maximum Number of Samples Required with High Probability}
We now compute the number of samples $n^p_{\text{total}}$ required to recover all $b$ correct directions with high probability $1-p$.

\begin{lemma} \label{lemma:high_prob_samples}
	In the same setting as \cref{lemma:expected_samples}, we have an upper bound  $n^p_{\text{total}}$ on the number of submatrices we need to sample until we have recovered all $b$ correct direction vectors by solving the following quadratic inequality for $n^p_{\text{total}}$
	\begin{equation*}
		\frac{p}{2} \leq \Phi\left(\frac{b \log(2 b/p^*) - n^p_{\text{total}} q}{\sqrt{n^p_{\text{total}} q(1-q)}}\right),
	\end{equation*}
	where $\Phi$ is the cumulative distribution function of the standard normal distribution and $p^* = p - 1 + (1-p_{fr})^b$.
\end{lemma}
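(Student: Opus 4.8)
The plan is to bound the probability that \tool has not recovered all $b$ correct directions $\overline{\vq}_i$ after $n := n^p_{\text{total}}$ sampled submatrices by $p$, by splitting the failure event into two contributions, exactly as in \cref{lemma:failure_prob}. The first is that at least one of the $b$ directions $\overline{\vq}_i$ is discarded by the sparsity filter of \cref{sec:filtering} whenever it is proposed; treating the $b$ directions as independent as there, this contributes at most $1 - (1-p_{fr})^b$. The second is that, conditioned on no such false rejection, the $n$ draws do not contain enough \emph{useful} submatrices — those satisfying the hypotheses of \cref{theorem:dirs} for some column $i$ — to cover all $b$ columns of $\overline{\mQ}$. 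Writing $p^* = p - 1 + (1-p_{fr})^b$ for the probability budget remaining after the first contribution, it suffices to show this conditional failure probability is at most $p^*$.

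For the second contribution I reuse the modelling of \cref{lemma:expected_samples}: under the i.i.d.\ assumption (well justified for $m \gg b$), the number $G$ of useful submatrices among the $n$ draws is $\mathrm{Binomial}(n,q)$ with $q$ lower-bounded in \cref{lemma:success_prob}, while the number $C_b$ of useful submatrices that must be drawn before every one of the $b$ directions has appeared is the coupon-collector random variable ($\E[C_b] = bH_b$), independent of $G$. Conditional failure is then exactly the event $\{G < C_b\}$, so for any threshold $t$ I split $\Pr[G < C_b] \le \Pr[C_b \ge t] + \Pr[G < t]$. The standard coupon-collector tail estimate — a union bound over the $b$ coupons giving $\Pr[C_b \ge t] \le b(1-\tfrac1b)^t \le b\,e^{-t/b}$ — is at most $p^*/2$ precisely when $t \ge b\log(2b/p^*)$, so I take $t = b\log(2b/p^*)$.

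It remains to make $\Pr[G < t] = \Pr[\mathrm{Binomial}(n,q) < t]$ at most $p^*/2$ as well, and this is the step that only yields a bound holding \emph{with high probability}: approximating the lower binomial tail by the normal CDF, $\Pr[\mathrm{Binomial}(n,q) < t] \approx \Phi\!\big((t - nq)/\sqrt{nq(1-q)}\big)$, and inserting $t = b\log(2b/p^*)$ turns the requirement into the inequality displayed in the statement (with $p/2$ there playing the role of this residual budget). Since both $nq$ and $\sqrt{nq(1-q)}$ appear, isolating $\Phi^{-1}$ and substituting $u = \sqrt{n^p_{\text{total}}}$ makes it a quadratic inequality in $u$, whose relevant root yields $n^p_{\text{total}}$. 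The three contributions then sum to $(1-(1-p_{fr})^b) + p^*/2 + p^*/2 = p$, as required. The main obstacle is this last step: rigorously it needs a Berry--Esseen or Chernoff-type tail bound in place of the Gaussian approximation, and more conceptually one has to be careful about the three nearly-independent sources of randomness — whether a draw is useful, which of the $b$ directions a useful draw reveals, and the false rejections — since it is exactly their bookkeeping that forces the somewhat unusual definition of $p^*$.
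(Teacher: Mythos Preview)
Your proposal is correct and follows essentially the same approach as the paper: split off the false-rejection contribution $1-(1-p_{fr})^b$ to leave budget $p^*$, use the coupon-collector union-bound tail $b e^{-t/b}\le p^*/2$ to fix $t=b\log(2b/p^*)$, and then invoke the normal approximation to the binomial lower tail to determine $n^p_{\text{total}}$, combining everything by a union bound. Your observation that the $p/2$ in the displayed inequality really ought to be $p^*/2$ matches an internal inconsistency in the paper's own statement versus proof, and your caveat about replacing the Gaussian approximation with a rigorous tail bound is also acknowledged there.
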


\begin{proof}
	At a high level, bound the number of valid directions $c^p$ we need to discover until we recover all $b$ distinct ones and then the number of submatrices $n^p_{\text{total}}$ we need to sample to obtain these $c^p$ directions, each with probability $1-\tfrac{p}{2}$, before applying the union bound.

	However, we first note that with probability $1-(1-p_{fr})^b$ we will (repeatedly) reject a correct direction due to a lack of induced sparsity and thus fail irrespective of the number of samples we draw. We thus correct our failure probability budget from $p$ to $p^* = p - 1 + (1-p_{fr})^b$, using the union bound.

	We now show how to compute the upper bound on the number of correct directions $c^p$ we need to find until we have found all $b$ distinct directions. To this end, we bound the probability of not sampling the $i^\text{th}$ direction $\overline{\vq}_i$ after finding $c$ candidates as $p_{\lnot i} = (1-\frac{1}{b})^c \leq e^{-\frac{c}{b}}$. We can then bound the probability of missing any of the $b$ directions using the union bound as $p_{\lnot \text{all}} \leq \sum_{i=1}^b p_{\lnot i} = b e^{-\frac{c^p}{b}}$. We thus obtain the minimum number $c^p$ of correct directions to find all $b$ distinct ones with probability at least $\tfrac{p^*}{2}$ as $c^p \geq b \log(2 b/p^*)$.

	We can now compute the number $n^p_{\text{total}}$ of samples required to find $c^p$ submatrices satisfying the condition of \cref{theorem:dirs} for some $i$ with probability $1-\tfrac{p}{2}$. To this end, we approximate the Binomial distribution $\bc{B}(n,q)$ with the normal distribution $\bc{N}(nq, nq(1-q))$ \citep{parameswaran1979statistics}, which is generally precise if $\min(nq, n (q-1)) > 9$ \citep{Chen2011}, which holds for $b \geq 5$. We thus obtain the number of samples $n^p_{\text{total}}$ required to find $c^p$ valid directions with high probability $1-\tfrac{p^*}{2}$ by solving $\tfrac{p^*}{2} = \Phi(\frac{c^p - n^p_{\text{total}} q}{\sqrt{n^p_{\text{total}} q(1-q)}})$ for $n^p_{\text{total}}$ which boils down to a quadratic equation.

	By the union bound, we have that the total failure probability of not finding all $b$ correct directions is at most $p$.
\end{proof}

For a batch size of $b=10$ and $p=10^{-8}$, we, e.g., obtain $n \approx 4 \times 10^4$.

\failiureprobability*

\begin{proof}
	We will first compute the probability of $\frac{\partial \mathcal{L}}{\partial \mZ}$ not containing a submatrix $\mA$ satisfying the conditions of \cref{theorem:dirs} for all $i \in \{1, \dots, b\}$ and then the probability of us failing to discover it despite exhaustive sampling.

	We observe that the number $k$ of rows in $\frac{\partial \mathcal{L}}{\partial \mZ}$ with a zero $i^\text{th}$ entry is binomially distributed with success probability $\frac{1}{2}$. 
	For each $k \geq b-1$, we can construct $\binom{k}{b-1}$ submatrices $\mA$ with an all-zero $i^\text{th}$ column. 
	The probability of any such submatrix having full rank is $1 - (\frac{1}{2} - o_{b-1}(1))^{b-1} > 1-0.939^{b-1}$ \citep{tikhomirov2020singularity,tao2005random}.

	We thus have the probability of $\frac{\partial \mathcal{L}}{\partial \mZ}$ containing at least one submatrix $\mA$ with full rank and an all-zero $i^\text{th}$ column as 
	$\sum_{k=b-1}^{m} \binom{m}{k} \frac{1}{2^m} \left(1 - 0.939^{(b-1)\binom{k}{b-1}}\right)$.

	Using the union bound, we thus obtain an upper bound on the probability of $\frac{\partial \mathcal{L}}{\partial \mZ}$ not containing any submatrix $\mA$ with full rank and an all-zero $i^\text{th}$ column for all $i \in \{1, \dots, b\}$.

	To compute the probability of us failing to discover an existing submatrix despite exhaustive sampling, we first note that we have the probability $p_{fr}$ of an arbitrary column in $\frac{\partial \mathcal{L}}{\partial \mZ}$ being less sparse than our threshold $\tau$. Thus, with probability $1-(1-p_{fr})^b$ we will discard at least one correct direction due to it inducing an unusually dense column in $\frac{\partial \mathcal{L}}{\partial \mZ}$. 	

	We now obtain the overall failure probability via the union bound.
\end{proof}

\section{Deferred Algorithms} \label{app:algos}

Here, we present the Algorithms \textsc{\computesigma} and \textsc{GreedyFilter} referenced in \cref{sec:final_algorithm}. 
\begin{algorithm}[h]
	\caption{\computesigma}
	\label{alg:compute_sigma}
	\begin{algorithmic}[1]
		\Function{\computesigma}{$\mL, \mR, \mW, \vb, \frac{\partial \mathcal{L}}{\partial\vb}, \mathcal{B}$}
			\State $\mQ \leftarrow\;$ \textsc{FixScale}\xspace$(\mathcal{B}, \mL, \frac{\partial \mathcal{L}}{\partial\vb})$
			\State $\frac{\partial \mathcal{L}}{\partial\mZ} \leftarrow \mL\cdot\mQ$
			\State $\mX^T \leftarrow \mQ^{-1}\cdot \mR$
			\State $\mZ=\mW \cdot \mX + (\vb| \dots| \vb)$
			\State $\lambda_{-} \leftarrow \sum_{i,j} \mathds{1}[\mZ_{i,j} \leq 0] \cdot \mathds{1}[\frac{\partial \mathcal{L}}{\partial\mZ_{i,j}} = 0]$
			\State $\lambda_{+} \leftarrow \sum_{i,j} \mathds{1}[\mZ_{i,j} > 0] \cdot \mathds{1}[\frac{\partial \mathcal{L}}{\partial\mZ_{i,j}} \neq 0]$
			\State $\lambda \leftarrow \frac{\lambda_{-} + \lambda_{+}}{m\cdot b}$
			\State \Return  $\lambda$
		\EndFunction
	\end{algorithmic}
\end{algorithm}
\begin{algorithm}[h]
	\caption{\greedyfilter}
	\label{alg:greedy_filter}
	\begin{algorithmic}[1]
		\Function{GreedyFilter}{$\mL, \mR, \mW, \vb, \frac{\partial \mathcal{L}}{\partial\vb}, \mathcal{C}$}
			\State $\mathcal{B} \leftarrow \{\}$
			\While{rank of $\mathcal{B}$ is $B$}
				\State Select the sparsest vector $\overline{\vq}_i'$ from $\mathcal{C}\setminus\mathcal{B}$
				\State $\mathcal{B} \leftarrow \mathcal{B} \cup \{\overline{\vq}_i'\}$
				\If{$\mathcal{B}$ is {\bfseries not} of full rank}
					\State $\mathcal{B} \leftarrow \mathcal{B} \setminus \{\overline{\vq}_i'\}$
				\EndIf
				\EndWhile
			\State
			\State $\lambda \leftarrow\;$\computesigma$(\mL, \mR, \mW, \vb, \frac{\partial \mathcal{L}}{\partial\vb}, \mathcal{B})$
			\While{{\bfseries not} changed}
				\State changed $\leftarrow$ False
				\For{$(\overline{\vq}_i', \overline{\vq}_j')$ {\bfseries in} $\mathcal{B}\times(\mathcal{C} \setminus \mathcal{B})$}
					\State $\mathcal{B'} \leftarrow \mathcal{B} \setminus \{\overline{\vq}_i'\} \cup \{\overline{\vq}_j'\} $
					\State $\lambda'\leftarrow\;$\computesigma$(\mL, \mR, \mW, \vb, \frac{\partial \mathcal{L}}{\partial\vb},\mathcal{B}')$
					\If{$\lambda'>\lambda$}
						\State $\mathcal{B} \leftarrow \mathcal{B'}$
						\State $\lambda \leftarrow \lambda'$
						\State changed $\leftarrow$ True
					\EndIf
				\EndFor
			\EndWhile
			\State $\mQ \leftarrow\;$ \textsc{FixScale}\xspace$(\mathcal{B}, \mL, \frac{\partial \mathcal{L}}{\partial\vb})$
			\State $\mX^T \leftarrow \mQ^{-1}\cdot \mR$
			\State \Return$\lambda$, $\mX$
		\EndFunction
	\end{algorithmic}
\end{algorithm}

\section{Dataset Licenses} \label{app:datacopyright}

In this work, we use the commonly used \mnist~\cite{lecun2010mnist}, \cifar~\cite{krizhevsky2009learning}, \TIN~\cite{Le2015TinyIV} and \ImgNet~\cite{imagenet} image datasets. No information regarding licensing has been provided on their respective websites. Further, we use Adult tabular dataset under the Creative Commons Attribution 4.0 International (CC BY 4.0) license.

\section{Deferred Experiments} \label{app:experiments}

\subsection{Main Results with Error Bars}\label{app:main_with_error}
In this section, we provide the results from our main experiment in \cref{tab:main_results}, alongside 95\% confidence intervals.
\begin{table}[t]
	\centering
	\vspace{-4mm}
	\caption{Reconstruction quality across 100 batches.}
	\label{tab:main_results_error}
	\centering
	\vspace{1mm}
	\resizebox*{\linewidth}{!}{
		\begin{tabular}{@{}
				lrrrr@{}}
			\toprule
			Dataset & PSNR $\uparrow$ & LPIPS $\downarrow$ & Acc (\%) $\uparrow$ & Time/Batch \\
			\midrule
			\mnist & $\;\,99.1\pm 13.2$  & NaN & \textbf{99} & 2.6 min\\
			\cifar & $106.6 \pm 15.1$  & $\;\,1.16 \!\times\! 10^{-5} \pm 2.26 \!\times\! 10^{-4}$ & \textbf{99} & 1.7 min \\
			\TIN   & $110.7 \pm 12.8$  & $1.62 \!\times\! 10^{-4}  \pm 3.22 \!\times\! 10^{-3}$ & \textbf{99} & \textbf{1.4 min} \\
			\ImgNet $224\times 224$ & $125.4 \pm 11.2$ & $1.05 \!\times\! 10^{-5} \pm 9.50 \!\times\! 10^{-4}$ & \textbf{99} & 2.1 min\\
			\ImgNet $720\times 720$ & $\mathbf{125.6 \pm 8.1}$ & $\mathbf{8.08 \!\times\! 10^{-11} \pm 3.05 \!\times\! 10^{-3}}$ & \textbf{99} & 2.6 min\\
			\bottomrule
		\end{tabular}
	}
\end{table}

\subsection{Experiments on Label-Heterogeneous Data}\label{app:label_het}
In this section, we provide experiments on heterogeneous client data. In particular, we look at the extreme case where each client has data only from a single class. As label repetition makes optimization-based attacks harder \citep{geiping,nvidia,aaai}, the results presented in \cref{tab:same_label} for the TinyImageNet dataset show another advantage of our algorithm, namely, \tool
works regardless of the label distribution, providing even better reconstruction results compared to \cref{tab:main_results} for single-label batches. 
\begin{table}[h]
	\centering
	\vspace{-4mm}
	\caption{Mean reconstruction quality metrics across 100 batches for batches only containing samples from only one class in the same setting as \cref{tab:main_results}.}
	\label{tab:same_label}
	\centering
	\vspace{1mm}
	\resizebox*{0.7\linewidth}{!}{
		\begin{tabular}{@{}
				lccrcc@{}}
			\toprule
			Dataset & PSNR $\uparrow$ & SSIM $\uparrow$ & \multicolumn{1}{c}{MSE $\downarrow$} & LPIPS $\downarrow$ & Acc  (\%) $\uparrow$ \\
			\midrule
			\textsc{TinyImgNet}   & $127.7$  & $0.999717$ & $4.80 \!\times\! 10^{-6}$ & $10.36 \!\times\! 10^{-5}$ & $98$ \\
			
			\bottomrule
		\end{tabular}
	}
	\vspace{-3mm}
\end{table}

\subsection{Effectivness of our 2-Stage Greedy Algorithm} \label{app:twostage}
\begin{figure}[t]
	\centering
	\includegraphics[width=0.6\linewidth]{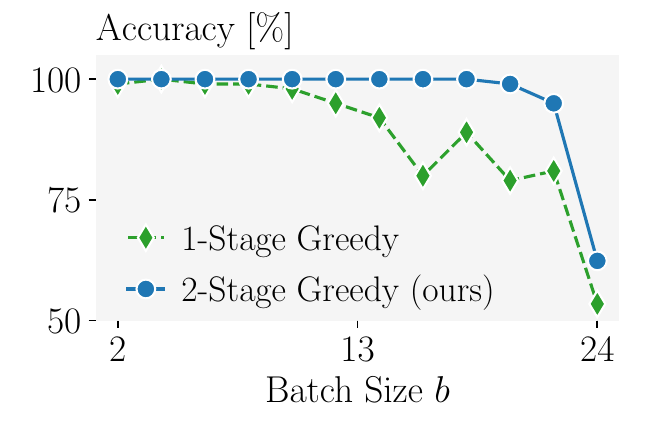}
	\vspace{-3mm}
	\caption{Effect of the second stage of our reconstruction algorithm discussed in \cref{sec:greedy_optimization}, depending on the batch size $b$.}
	\label{fig:effect_2stage}
	\vspace{-3mm}
\end{figure}

In this section, we compare reconstruction success rate (accuracy) with and without the second stage of our greedy algorithm discussed in \cref{sec:greedy_optimization} in \cref{fig:effect_2stage}. We observe that the second stage filtering becomes increasingly important for larger batch size $b$.

\subsection{Effect of Training on \tool}\label{app:train}
\begin{figure}[h]
	\centering
	\begin{subfigure}{.4\linewidth}
		\centering
		\includegraphics[width=\linewidth]{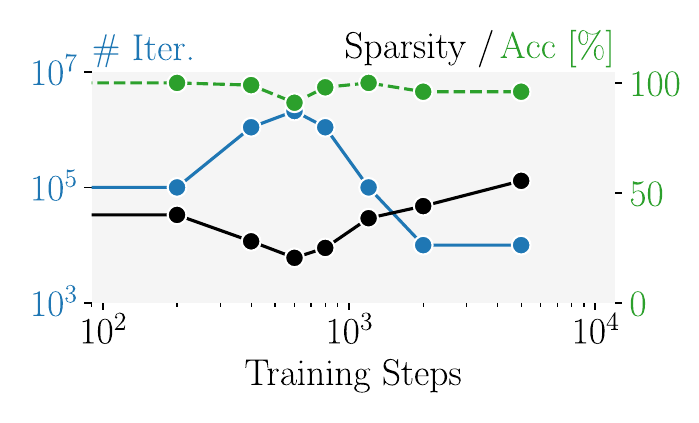}
		\vspace{-6mm}
		\subcaption{Training Set}
	\end{subfigure}
	\hfil
	\begin{subfigure}{.4\linewidth}
		\centering
		\includegraphics[width=\linewidth]{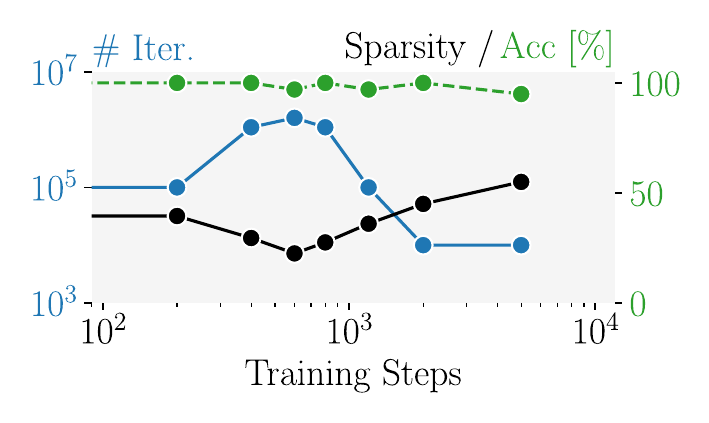}
		\vspace{-6mm}
		\subcaption{Test Set}
	\end{subfigure}
	\begin{subfigure}{.18\linewidth}
		\centering
		\hspace{-4mm}\includegraphics[width=\linewidth]{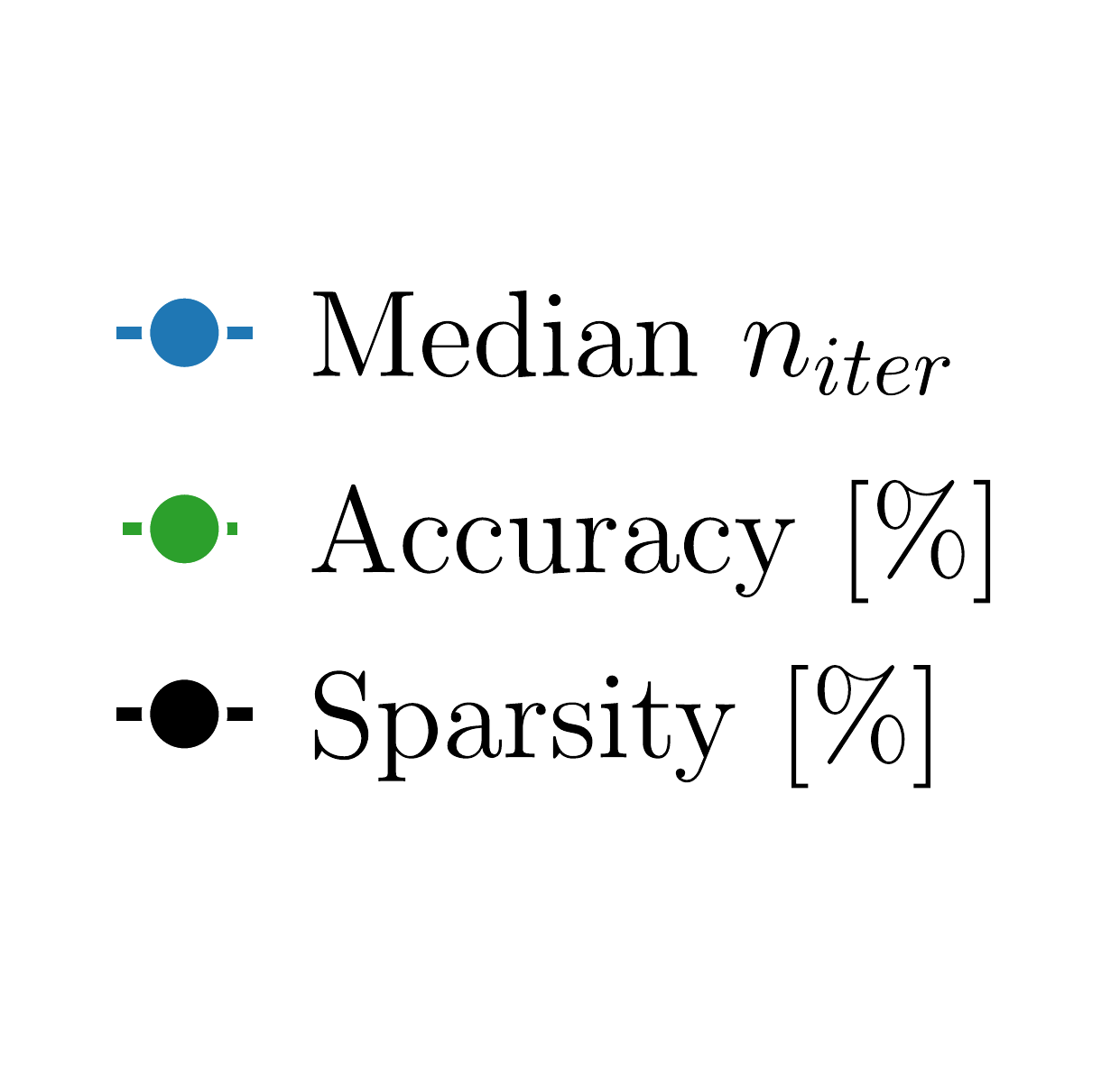}
		\vspace{10mm}
	\end{subfigure}
	\caption{Effect of training (on \mnist) on the effectiveness of \tool at a batch size of $b=10$ evaluated on the \mnist training (a) and test (b) sets.}
	\label{fig:effect_training}
	\vspace{-3mm}
\end{figure}

In this section, we demonstrate how training effects \tool's performance. To this end, we train a network on \mnist and evaluate \tool periodically during training both on the train and test datasets, visualizing results in \cref{fig:effect_training}. We observe that \tool performance is very similar between the two datasets we evaluate on. Further, we see that \tool performs very well on trained networks, with the number of required steps by the algorithm being even lower those those on untrained networks. However, if the minimum column sparsity of $\frac{\partial\bc{L}}{\partial\mZ}$ drops significantly, as is the case for the checkpoints around 1000 training steps in the illustrated run. \tool's performance drops slightly.

\subsection{Failure Probabilities}\label{app:failure_prob}
In this section, we validate experimentally our theoretical results on \tool's failure rate for several batch sizes $b$ (\cref{lemma:failure_prob}). As this requires exhaustive sampling of all $\binom{m}{b-1}$ submatrices of $\mL$ we only consider small batch sizes $b \leq 10$ and networks $m \leq 40$. We show the results in \cref{fig:failure_prob_val} where we observe that the empirical failure probability (blue) with $95\%$ Clopper-Pearson confidence bounds generally agrees with the analytical approximation (solid line) and always lies below the  analytical upper bound (dashed line). We conclude that in most settings, the number of required samples rather than complete failure is the limiting factor for \tool's performance.

\begin{figure}[h]
	\centering
\begin{subfigure}{.35\linewidth}
	\centering
	\includegraphics[width=0.9\linewidth]{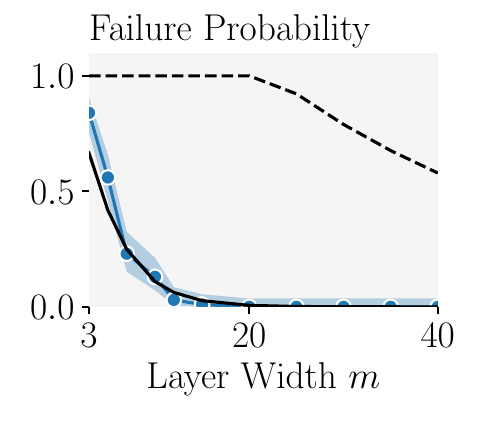}
	\vspace{-3mm}
	\subcaption{$b=2$}
\end{subfigure}
\hfil
\begin{subfigure}{.35\linewidth}
	\centering
	\includegraphics[width=0.9\linewidth]{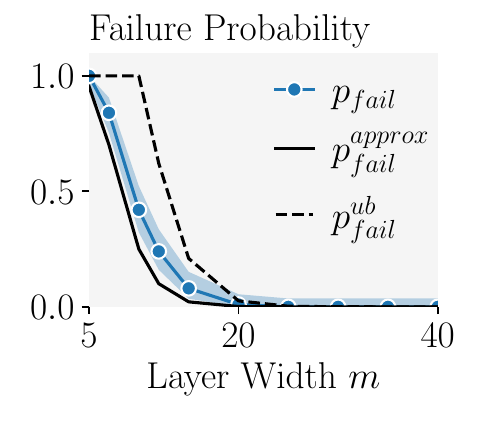}
	\vspace{-3mm}
	\subcaption{$b=4$}
\end{subfigure}
\begin{subfigure}{.35\linewidth}
	\centering
	\includegraphics[width=0.9\linewidth]{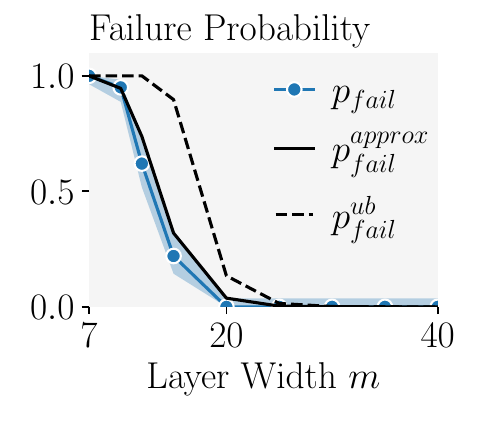}
	\vspace{-3mm}
	\subcaption{$b=6$}
\end{subfigure}
\hfil
\begin{subfigure}{.35\linewidth}
	\centering
	\includegraphics[width=0.9\linewidth]{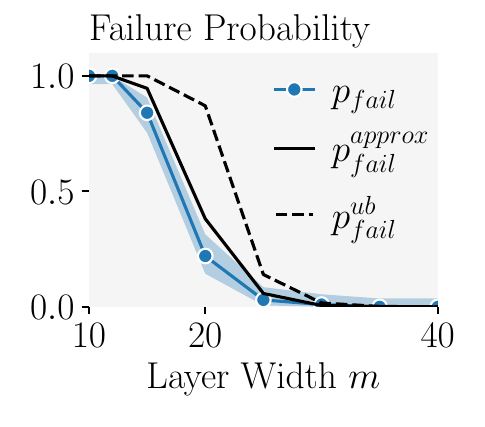}
	\vspace{-3mm}
	\subcaption{$b=8$}
\end{subfigure}
\vspace{-1mm}
\caption{Empirical failure probability (blue) with $95\%$ Clopper-Pearson confidence bounds (shaded blue) compared to the analytical upper bound (dashed line) and approximation (solid line) of the failure probability for different batch sizes $b$.}
\label{fig:failure_prob_val}
\end{figure}

\subsection{Results under DPSGD}\label{app:dp}
In this section, we show experimental results on reconstructing images from gradients defended using DP-SGD~\citep{dpsgd}. In \cref{tab:dp}, we report results on the \TIN dataset, $b=20$, with noise levels $\sigma \leq 1.0  \!\times\! 10^{-4}$ and gradient clipping that constrains the $\ell_2$ norm of the composite gradient vector, combining the gradients of all layers, to a maximum value of $C \in [1,2]$. We chose the maximum value $\sigma$ to be close to median gradient magnitude of the first linear layer which in our experiments was also $\approx 1.0  \!\times\! 10^{-4}$. We chose the range for $C$ such that for the upper bound $2$, most individual input gradients are not clipped, while for the lower bound $1$ almost all are. 

\paragraph{Adapting \tool to Noisy Gradients} In the experiments presented in \cref{tab:dp}, we make several adjustments to \tool to better handle the noise added by DPSGD. First, we apply looser thresholds in our sparsity filtering at \cref{alg_l:sparsity_filter} in \cref{alg:gradient_dismantle} to account for the noise added to the sparse entries of $\frac{\partial \mathcal{L}} {\partial \mZ}$. To account for the imperfect reconstructions in this setting, we also perform our early stopping (\cref{alg_l:early_stop} in \cref{alg:gradient_dismantle}) when the sparsity matching coefficient $\gamma$ reaches a lower value than $1$. Further, we sample matrices $\mL_A$ of larger size (${b+1\times b}$) to increase the numerical stability of our solutions under noise. While sampling larger $\mL_A$ is more computationally expensive, as $b+1$ instead of $b-1$ entries in $\mA$ are required to be correctly sampled as $0$, the resulting directions $\vq_i$ are more numerically stable as they are obtained as a solution of an overdetermined system of linear equations. Note that if $\mA$ is assumed to be of rank $b-1$, \cref{theorem:dirs} remains valid for these larger matrices $\mL_A$. Finally, due to our looser sparsity filtering described above we encounter more incorrect directions $\overline{\vq}_i$. We tackle this issue by only keeping $\overline{\vq}_i$ that correspond to matrices $\mL_A$ of rank exactly $b-1$. Under our assumption in \cref{theorem:dirs}, those are exactly the vectors $\overline{\vq}_i$ that correspond to $\mA$ of the correct rank $b-1$. Note that we apply these changes only for $\sigma>0$.

\paragraph{Invariance to Gradient Clipping} In \cref{tab:dp}, we observe that the quality of our reconstructions is not affected by the clipping constant $C$. This is not a coincidence, but rather a mathematical fact. To see this, note that the observed gradients w.r.t. $\mW$ under clipping are given by:
 \begin{equation*}
 	 \dot{\frac{\partial \mathcal{L}}{\partial \mW}} = \sum_{i=1}^b c_i \frac{\partial \mathcal{L}}{\partial \mW_i} = \sum_{i=1}^b c_i \frac{\partial \mathcal{L}}{\partial \mZ_i} \mX_i,
 \end{equation*}
 where $c_i\in \mathbb{R}$ are the unknown to the attacker factors applied by the clipping procedure to each individual input gradient $\frac{\partial \mathcal{L}}{\partial \mW_i}$. One can adapt the proof to \cref{theorem:dLdW_dLdZ.XT}, to show that   $\dot{\frac{\partial \mathcal{L}}{\partial \mW}} = \dot{\frac{\partial \mathcal{L}}{\partial \mZ}} \mX^T$, where we define $\dot{\frac{\partial \mathcal{L}}{\partial \mZ_i}}$ to be the clipped gradient w.r.t $\mZ$, consisting of the columns $\dot{\frac{\partial \mathcal{L}}{\partial \mZ_i}} = c_i\frac{\partial \mathcal{L}}{\partial \mZ_i}$. We also observe that one can adapt \cref{lemma:bias} to work directly on the clipped gradients as well, resulting in the formula $\dot{\frac{\partial \mathcal{L}} {\partial \vb}} = \dot{\frac{\partial \mathcal{L}} {\partial \mZ}} \begin{bsmallmatrix} 1 \\ \vphantom{\int\limits^x}\smash{\vdots} \\ 1 \end{bsmallmatrix}$ for the clipped gradient w.r.t. $\vb$. The formula follows from the observation that in our setting the same clipping factor $c_i$ is applied to the gradients of each layer, including $\frac{\partial \mathcal{L}}{\partial \vb\indices{_i}}$ and $\frac{\partial \mathcal{L}}{\partial \mW_i}$. By applying the rest of the theoretical results of the paper without change but on clipped gradients $\dot{\frac{\partial \mathcal{L}}{\partial \mZ}}$, instead of the original unclipped gradients $\frac{\partial \mathcal{L}} {\partial \mZ}$, we conclude that \tool is directly applicable on the clipped client gradient and that applying it on those still recover the true input matrix $\mX$ without the need of knowing the clipping constants $c_i$.

\paragraph{Robustness to Noise} From \cref{tab:dp}, we observe that \tool is very robust to noise. We emphasize in particular that even when noise of similar size to the size of the gradients in expectation is applied, we still obtain a reconstruction with PSNR $>28$. This is similar to the PSNR of 29.3 that \citet{geiping} achieves *without any noise* which is commonly considered unacceptable information leakage. These experiments suggest that to efficiently defend against \tool using noise, one needs to apply such high magnitudes that training will likely be significantly impeded.

\begin{table}[t]
	\centering
	\vspace{-4mm}
	\caption{Reconstruction quality across 100 batches of size $b=20$ computed on \TIN for gradients computed with DPSGD~\cite{dpsgd} with different noise levels $\sigma$ and gradient clipping levels $C$.}
	\label{tab:dp}
	\centering
	\vspace{1mm}
	\resizebox*{0.58\linewidth}{!}{
		\begin{tabular}{@{}
				lcccrcc@{}}
			\toprule
			Method & $C$ & $\sigma$ & PSNR $\uparrow$ & Acc (\%) $\uparrow$ \\
			\midrule
			Geiping et. al~\citep{geiping} & $0.00$ & $0$ & $29.3$ & $100$ \\
			\midrule
			\tool(Ours) & $1.00$ & $0$  & $118.2$ & $100$ \\
			\tool(Ours) & $1.25$ & $0$  & $118.1$ & $100$ \\
			\tool(Ours) & $1.50$ & $0$  & $118.5$ & $100$ \\
			\tool(Ours) & $1.75$ & $0$  & $118.7$ & $100$ \\
			\tool(Ours) & $2.00$ & $0$  & $118.0$ & $100$ \\
			\midrule
			\tool(Ours) & $1.00$ & $5.0  \!\times\! 10^{-6}$  & $38.6$ & $99$ \\
			\tool(Ours) & $1.25$ & $5.0  \!\times\! 10^{-6}$  & $40.4$ & $98$ \\
			\tool(Ours) & $1.50$ & $5.0  \!\times\! 10^{-6}$  & $41.9$ & $98$ \\
			\tool(Ours) & $1.75$ & $5.0  \!\times\! 10^{-6}$  & $42.2$ & $97$ \\
			\tool(Ours) & $2.00$ & $5.0  \!\times\! 10^{-6}$  & $42.0$ & $96$ \\
			\midrule
			\tool(Ours) & $1.00$ & $1.0  \!\times\! 10^{-5}$  & $38.2$ & $99$ \\
			\tool(Ours) & $1.25$ & $1.0  \!\times\! 10^{-5}$  & $40.0$ & $98$ \\
			\tool(Ours) & $1.50$ & $1.0  \!\times\! 10^{-5}$  & $38.5$ & $99$ \\
			\tool(Ours) & $1.75$ & $1.0  \!\times\! 10^{-5}$  & $39.2$ & $99$ \\
			\tool(Ours) & $2.00$ & $1.0  \!\times\! 10^{-5}$  & $39.6$ & $99$ \\
			\midrule
			\tool(Ours) & $1.00$ & $5.0  \!\times\! 10^{-5}$  & $32.3$ & $97$ \\
			\tool(Ours) & $1.25$ & $5.0  \!\times\! 10^{-5}$  & $33.5$ & $98$ \\
			\tool(Ours) & $1.50$ & $5.0  \!\times\! 10^{-5}$  & $34.4$ & $99$ \\
			\tool(Ours) & $1.75$ & $5.0  \!\times\! 10^{-5}$  & $34.6$ & $100$ \\
			\tool(Ours) & $2.00$ & $5.0  \!\times\! 10^{-5}$  & $34.1$ & $100$ \\
			\midrule
			\tool(Ours) & $1.00$ & $1.0  \!\times\! 10^{-4}$  & $29.7$ & $98$ \\
			\tool(Ours) & $1.25$ & $1.0  \!\times\! 10^{-4}$  & $29.3$ & $97$ \\
			\tool(Ours) & $1.50$ & $1.0  \!\times\! 10^{-4}$  & $29.9$ & $99$ \\
			\tool(Ours) & $1.75$ & $1.0  \!\times\! 10^{-4}$  & $29.4$ & $98$ \\
			\tool(Ours) & $2.00$ & $1.0  \!\times\! 10^{-4}$  & $28.7$ & $95$ \\

			\bottomrule
		\end{tabular}
	}
	\vspace{-2em}
\end{table}

\section{\tool under FedAvg Updates} \label{app:fedavg}
In this section, we first demonstrate theoretically that \tool can be generalized to attack FedAvg~\citep{fedavg} client updates, and then present empirical results confirming that \tool is indeed very effective under FedAvg protocols with different number of epochs $\mathcal{E}$, local client learning rates $\eta$, and, even works, when mini-batches of size $b_{\text{mini}}$ are used. 

\paragraph{Generalizing \tool to FedAvg Updates}
Assuming that a client uses all of its data points, $\mX$, in each local gradient step of the FedAvg protocol, i.e. $b_{\text{mini}} = b$, the client computes and subsequently shares with the server the following updated linear layer weights:
\begin{equation*}
	\mW^\mathcal{E} = \mW^0 - \eta\sum_{e=1}^{\mathcal{E}} \frac{\partial\mathcal{L}}{\partial \mW^e}= \mW^0 - \eta\sum_{e=1}^{\mathcal{E}} \frac{\partial\mathcal{L}}{\partial \mZ^e} \cdot \mX^T = \mW^0 - \eta\left(\sum_{e=1}^{\mathcal{E}} \frac{\partial\mathcal{L}}{\partial \mZ^e}\right) \cdot \mX^T,
\end{equation*} 
where $\mW^0$ is the global model sent by the server, $\mW^e$ represent the local client weights after $e$ client epochs, and $\frac{\partial\mathcal{L}}{\partial \mW^e}$ and $\frac{\partial\mathcal{L}}{\partial \mZ^e}$ are the weight and output gradients at epoch $e$. 

We empirically observe that sparsity patterns of the different local gradients $\frac{\partial\mathcal{L}}{\partial \mZ^e}$ are usually similar. This is expected as these patterns correspond to the ReLU activation patterns for the layer outputs $\mZ^e$ (see \cref{{sec:sparsity}}) at different local steps which are computed on the same data $\mX$ and with similar weights $\mW^e$. As the sparsity patterns for the individual gradients are similar, their sum $\sum_{e=1}^{\mathcal{E}} \frac{\partial\mathcal{L}}{\partial \mZ^e}$ also shares this sparsity pattern and is, thus, also sparse. As the server knows $\mW^0$ and it can subtract it from the client’s shared weights $\mW^\mathcal{E}$ and apply \cref{theorem:dirs}, as before, on the sparse matrix $\sum_{e=1}^{\mathcal{E}} \frac{\partial\mathcal{L}}{\partial \mZ^e}$ to obtain the corresponding matrix $\mQ$ and client data $\mX$. We note that while our sparsity matching coefficient $\sigma$ will typically not reach 1 for the final reconstruction in this setting, as there is some mismatch between the sparsity patterns of the different output gradients $\frac{\partial\mathcal{L}}{\partial \mZ^e}$, we have found that \tool remains practically effective regardless. 

We note that \tool can be even be generalized to FedAvg protocols that use random mini-batches $\mX^e$ of size $b_{\text{mini}} < b$ sampled from $\mX$ at each local step. This is the case, as each local client gradient $\frac{\partial\mathcal{L}}{\partial \mW^e} = \frac{\partial\mathcal{L}}{\partial \mZ^e} (\mX^e)^T$, can be represented as $\overline{\frac{\partial\mathcal{L}}{\partial \mZ^e}} \mX^T$, where $\overline{\frac{\partial\mathcal{L}}{\partial \mZ^e}}$ is derived from $\frac{\partial\mathcal{L}}{\partial \mZ^e}$ by adding $0$ columns at batch positions corresponding to batch elements not in $\mX^e$. Importantly, as $\overline{\frac{\partial\mathcal{L}}{\partial \mZ^e}}$ only adds $0$ columns to $\frac{\partial\mathcal{L}}{\partial \mZ^e}$, the sparsity of $\overline{\frac{\partial\mathcal{L}}{\partial \mZ^e}}$ can only increase, allowing to conclude that $\sum_{e=1}^{\mathcal{E}} \overline{\frac{\partial\mathcal{L}}{\partial \mZ^e}}$ remains sparse, and, thus, \cref{theorem:dirs} can still be applied to it.

\paragraph{Experiments with FedAvg Updates}
Next, we show empirically the effectiveness of \tool for FedAvg updates. In \cref{tab:fedavg_epoch}, we show the results of attacking clients with $b=20$ datapoints from the \TIN dataset for different number of local client epochs $\mathcal{E}$. We observe that even for $\mathcal{E}=50$ gradient steps we recover data from most batches, with quality similar to the quality achieved when attacking individual gradients. This is expected as \cref{theorem:dirs} still holds, as described in the previous paragraph. The slight dip in the fraction of reconstructed batches for larger number of steps $\mathcal{E}$ can be attributed to some client batches inducing larger discrepancy between the sparsity patterns of $\frac{\partial\mathcal{L}}{\partial \mZ^e}$ compared to others, resulting in their sum being much less sparse. Further, \cref{tab:fedavg_epoch} also shows that \tool can attack client updates that take $b/b_{\text{mini}}=4$ local steps per epoch for $\mathcal{E}=20$ epochs. Interestingly, while a total of $80$ gradient steps are taken in this scenario the results are closer to the $b_{\text{mini}}=20, \mathcal{E}=20$ setting, instead of the $b_{\text{mini}}=20, \mathcal{E}=50$ setting. This can be explained by the increased sparsity of the individual expanded gradients $\overline{\frac{\partial\mathcal{L}}{\partial \mZ^e}}$. 

Finally, we experiment with different local client learning rates $\eta$ and show the results in \cref{tab:fedavg_lr}. We observe that even for large learning rates \tool still recovers its inputs well, showing that while the individual weights $\mW^e$ can change a lot, their induced sparsity on $\frac{\partial\mathcal{L}}{\partial \mZ^e}$ remains consistent.

\begin{table}[t]
	\centering
	\vspace{-4mm}
	\caption{Reconstruction quality across 100 FedAvg client updates computed on \TIN batches of size $b=20$ for different number of epochs $\mathcal{E}$ and different mini batch sizes $b_{\text{mini}}$.}
	\label{tab:fedavg_epoch}
	\centering
	\vspace{2mm}
	\resizebox*{0.45\linewidth}{!}{
		\begin{tabular}{@{}
				lccrcc@{}}
			\toprule
			$\eta$  & $\mathcal{E}$ & $b_{\text{mini}}$ & PSNR $\uparrow$ & Acc (\%) $\uparrow$ \\
			\midrule
			0.01 & 1  & 20 & $97.8$ & $97$ \\
			0.01 & 5  & 20 & $103.9$ & \textbf{100} \\
			0.01 & 10 & 20 & $106.7$ & $99$ \\
			0.01 & 20 & 20 & \textbf{108.90} & $98$ \\
			0.01 & 50 & 20 & $104.9$ & $90$ \\
			0.01 & 20 & 5  & $106.7$ & $97$ \\
			\bottomrule
		\end{tabular}
	}
\end{table}

\begin{table}[t]
	\centering
	\vspace{-4mm}
	\caption{Reconstruction quality across 100 FedAvg client updates computed on \TIN batches of size $b=20$ for different local client learning rates $\eta$.}
	\label{tab:fedavg_lr}
	\centering
	\vspace{1mm}
	\resizebox*{0.45\linewidth}{!}{
		\begin{tabular}{@{}
				lccrcc@{}}
			\toprule
			$\eta$ & $\mathcal{E}$ & $b_{\text{mini}}$ & PSNR $\uparrow$ & Acc (\%) $\uparrow$ \\
			\midrule
			0.1   & 5 & 20 & \textbf{119.3} & $95$ \\
			0.01  & 5 & 20 & $103.9$ & \textbf{100} \\
			0.001 & 5 & 20 & $85.5$ & \textbf{100} \\
			\bottomrule
		\end{tabular}
	}
\end{table}

\section{Additional Visualisations}

In this section we present additional visualisations of the reconstructions obtained by \tool. First, in \cref{fig:full_batch} we show an extended comparison between the images recovered by our method and \citet{geiping} on the \textsc{TinyImageNet} batch first shown in \cref{fig:accept}. In \cref{fig:full_batch} we operate in the same setting as \cref{tab:same_label}, namely batches of only a single class. We observe that while some images are reconstructed well by \citet{geiping}, most of the images are of poor visual quality, with some even being hard to recognize. In contrast, all of our reconstructions are pixel perfect. This in particular also means, that \tool's reconstructions improve in fine-detail recovery even upon the well recovered images of \citet{geiping}. This is expected as our attack is exact (up numerical errors).

\begin{figure}[t!]
	\centering
	\resizebox{1.03\linewidth}{!}{
		\hspace{-5mm}
		\input{figures/full_comparison_fig.tex}
	}
	\vspace{-6mm}
	\caption{The reconstructions of all images from \cref{fig:accept}, reconstructed using our \tool (top) or the prior state-of-the-art \citet{geiping} (mid), compared to the ground truth (bottom).}
	\vspace{-3mm}
	\label{fig:full_batch}
\end{figure}

Further, to show the results in \cref{fig:full_batch} are representative, in \crefrange{fig:10p}{fig:90p} we provide additional visualizations of the reconstructions obtained by \tool corresponding to the $10^{\text{th}}$,$50^{\text{th}}$, and $90^{\text{th}}$ percentiles of the  PSNRs obtained in the \textsc{TinyImageNet} experiment reported in \cref{tab:main_results}. We observe that only 1 sample has visual artefacts for the $10^{\text{th}}$ percentile batch (top left image in \cref{fig:10p}) and that the $50^{\text{th}}$ and $90^{\text{th}}$ percentile batches contain only perfect reconstructions. We theoreticize that the visual artefact in \cref{fig:10p} is a result of a numerical instability issue and that using $\mL_A$ of bigger size as described in \cref{app:dp} one could further alleviate it in exchange of additional computation.

Finally, we demonstrate what happens to \tool reconstructions in the rare case when the algorithm fails to recover all correct directions $\overline{q}_i$ from the batch gradient. In \cref{fig:spear_fail}, we show the only such batch for the \textsc{TinyImageNet} experiment reported in \cref{tab:main_results}. The batch has 2 wrong directions and still achieves an average PSNR of $91.2$ (the worst PSNR obtained in this experiment), which is still much higher compared to prior work. Further, all but 2 images are affected by the failure.

\begin{figure}[H]
	\centering
	\hspace{-0.5cm}
	\resizebox{1.03\linewidth}{!}{
		\hspace{-5mm}
		\input{figures/spear_10th_percentile_fig.tex}
	}
	\caption{Visualisation of the images reconstructed by \tool from the batch whose PSNR is at the $10^{\text{th}}$ percentile based on the set of 100 \textsc{TinyImageNet} reconstructions reported in \cref{tab:main_results}.}
	\label{fig:10p}
\end{figure}

\begin{figure}[H]
	\centering
	\hspace{-0.5cm}
	\resizebox{1.03\linewidth}{!}{
		\hspace{-5mm}
		\input{figures/spear_50th_percentile_fig.tex}
	}
	\caption{Visualisation of the images reconstructed by \tool from the batch whose PSNR is at the $50^{\text{th}}$ percentile based on the set of 100 \textsc{TinyImageNet} reconstructions reported in \cref{tab:main_results}.}
	\label{fig:50p}
\end{figure}

\begin{figure}[H]
	\centering
	\hspace{-0.5cm}
	\resizebox{1.03\linewidth}{!}{
		\hspace{-5mm}
		\input{figures/spear_90th_percentile_fig.tex}
	}
	\caption{Visualisation of the images reconstructed by \tool from the batch whose PSNR is at the $90^{\text{th}}$ percentile based on the set of 100 \textsc{TinyImageNet} reconstructions reported in \cref{tab:main_results}.}
	\label{fig:90p}
\end{figure}

\begin{figure}[H]
	\centering
	\resizebox{1.03\linewidth}{!}{
		\hspace{-5mm}
		\input{figures/spear_fail_fig.tex}
	}
	\vspace{-6mm}
	\caption{Visualisation of the images reconstructed by \tool from the only batch from the 100 \textsc{TinyImageNet} reconstructions reported in \cref{tab:main_results}, where not all recovered directions $\overline{q}_i'$ are correct. \tool recovered 2/20 wrong directions, resulting in the left most images being wrongly recovered.}
	\vspace{-3mm}
	\label{fig:spear_fail}
\end{figure}

	\fi
\end{document}